\let\proof\@undefined
\let\endproof\@undefined
\let\MYcaption\@makecaption
\let\@makecaption\MYcaption
\newtheorem{proposition}{Proposition}
\newtheorem{definition}{Definition}
\DeclareMathOperator*{\argmin}{arg\,min}
\title{\LARGE \bf
Safe Reinforcement Learning for Probabilistic Reachability and\\ Safety Specifications: A Lyapunov-Based Approach
\thanks{This work was supported in part by  the Creative-Pioneering Researchers Program through SNU, the Basic Research Lab Program through the National Research Foundation of Korea funded by the MSIT(2018R1A4A1059976), and Samsung Electronics.
}
}
\author{
Subin Huh  \and Insoon Yang
\thanks{S. Huh, and I. Yang are with the Department of Electrical and Computer Engineering, Automation and Systems Research Institute,  Seoul National University, Seoul 08826, South Korea,  {\tt\small \{subinh1994, insoonyang\}@snu.ac.kr}}%
}
\begin{document}

\maketitle

\begin{abstract}
	Emerging applications in robotic and autonomous systems, such as autonomous driving and robotic surgery, often involve critical safety constraints that must be satisfied even when information about system models is limited. 
		In this regard, we propose a model-free safety specification method that learns the maximal probability of safe operation by carefully combining probabilistic reachability analysis and safe reinforcement learning (RL). 	
	Our approach constructs a Lyapunov function with respect to a safe policy to restrain each policy improvement stage.
	As a result, it yields a sequence of safe policies that determine the range of safe operation, called the \emph{safe set}, which monotonically expands and gradually converges.
	We also develop an efficient safe exploration scheme that accelerates the process of identifying the safety of unexamined states.
Exploiting the Lyapunov shieding, our method regulates the exploratory policy to avoid dangerous states with high confidence.
To handle high-dimensional systems, we further extend our approach to deep RL by introducing a Lagrangian relaxation technique to establish a tractable actor-critic algorithm.
The empirical performance of our method is demonstrated through continuous control benchmark problems, such as a reaching task on a planar robot arm.
\end{abstract}

\section{Introduction}

Reachability and safety specifications for robotic and autonomous systems
are one of fundamental problems for the verification of such systems. 
It is difficult to imagine deploying robots, without (safety) verification, in practical environments  due to possible critical issues such as collisions and malfunctions.
Several reachability analysis techniques have been developed for the safe operation of various types of systems (e.g.,~\cite{abate2008probabilistic, Majumdar2014, Chen2018}) and applied to quadrotor control~\cite{gillula2011applications}, legged locomotion~\cite{Piovan2015}, obstacle avoidance~\cite{Malone2017}, among others.
However, the practicality of these tools is often limited  because
they require knowledge of system models.
The focus of this work is to develop a model-free reinforcement learning method for specifying reachability and safety  in a probabilistic manner. 

Several learning-based  safety specification methods have recently been proposed for \emph{deterministic} dynamical systems without needing complete information about system models. 
To learn backward reachable sets, Hamilton--Jacobi reachability-based tools were used in conjunction with Gaussian process regression~\cite{Fisac2018} and reinforcement learning~\cite{Fisac2019}. 
As another safety certificate, a region of attraction was estimated using Lyapunov-based reinforcement learning~\cite{Berkenkamp2017} and 
a neural network Lyapunov function~\cite{Richards2018}.
Forward invariance has also been exploited for safety verification by learning control barrier functions~\cite{Wang2018, Taylor2019}.

Departing from these tools for deterministic systems, 
we propose a model-free safety specification method for stochastic systems by carefully combining probabilistic reachability analysis and reinforcement learning.
Specifically, our method aims to learn the maximal probability of avoiding the set of unsafe states.
Several methods have been developed for computing the probability of safety  in various cases
via dynamic programming when the system model is known~\cite{abate2008probabilistic, Summers2010, Lesser2016, Yang2018}.  
To overcome this limitation, our tool uses model-free reinforcement learning for estimating the probability of safety.
We further consider safety guarantees during the learning process so that our scheme runs without frequent intervention of a human supervisor who takes care of safety.
To attain this property, we employ the Lyapunov-based RL framework proposed in \cite{chow2018lyapunov}, where the Lyapunov function takes the form of value functions, and thus safety is preserved in a probabilistic manner through the Bellman recursion.
We revise this safe RL method to enhance its exploration capability.
Note that the purpose of exploration in our method is to enlarge or confirm knowledge about safety, while most safe RL schemes  encourage exploration to find reward-maximizing policies within verified safe regions~\cite{turchetta2016safe,alshiekh2018safe,wachi2018safe}.

The main contributions of this work can be summarized as follows.
First, we propose a safe RL method that specifies the probabilistic safety of a given Markov control system without prior information about the system dynamics.
Our approach yields a sequence of safe and improving policies by imposing the Lyapunov constraint in its policy improvement stage and establishing a Lyapunov function in the policy evaluation stage.
If there is no approximation error, our RL-based safety specification algorithm is guaranteed to run safely throughout the learning process.
In such a case, the safe region determined by our approach also monotonically expands in a stable manner, and eventually converges to the maximal safe set.
Second, we  develop an efficient safe exploration scheme to learn safe or reachable sets in a sample-efficient manner.
Safe policies tend to avoid reaching the borders of safe regions, so the ``learned'' probability of safety at their borders and outside them is likely to be more inaccurate than others.
To mitigate the imbalance of knowledge, we select the least-safe policy to encourage exploration.
This exploratory policy visits less-safe states so that the safe set becomes more accurate or grows faster.
Third, we implement our approach with deep neural networks to alleviate the scalability issue that arises in high-dimensional systems.
Converting the Lyapunov constraints to a regularization term, our approach can be implemented in conventional actor-critic algorithms for deep RL.
We further show that our method outperforms other baseline methods through simulation studies.
%

\section{Background}\label{sec:setup}

We consider an MDP, defined as a tuple $\left( \mathcal{S}, \mathcal{A}, p \right)$, where $\mathcal{S}$ is the set of states, $\mathcal{A}$ is the set of actions, and $p: \mathcal{S} \times \mathcal{A} \times \mathcal{S} \to [0,1]$ is the transition probability function.
We also use the notation $\mathcal{S}_{\mathrm{term}}$ and $\mathcal{S}'$ to represent the set of termination states and non-terminal states, respectively.
Moreover, a (stochastic) Markov policy, $\pi : \mathcal{S} \times \mathcal{A} \to [0,1]$, is a measurable function, and $\pi ( \bm{a} | \bm{s} )$ represents the probability of executing action $\bm{a}$ given state $\bm{s}$.
We also let $\Pi$ denote the set of stochastic Markov policies.

\subsection{Probabilistic Reachability and Safety Specifications}

We consider the problem of specifying the probability that the state of an MDP will not visit a pre-specified \emph{target set} $\mathcal{G} \subseteq S$ before arriving at a terminal state given an initial state $\bm{s}$ and a Markov policy $\pi$.
For our purpose of safety specification, the target set represents the set of \emph{unsafe} states.
The probability of safety given a policy $\pi$ and an initial state $\bm{s}$ is  denoted by $P_{\bm{s}}^{\mathrm{safe}} (\pi)$.
To compute it, we consider the problem of evaluating $1- P_{\bm{s}}^{\mathrm{safe}} (\pi)$, which represents the probability of visiting the target set at least once given an initial state $\bm{s}$ and a Markov policy $\pi$:
\[
	P_{\bm{s}}^{\mathrm{reach}}(\pi) := \mathbb{P}^{\pi}\left( \exists t\in \left\{ 0,\dots,T^{\ast}-1 \right\} \;\mathrm{s.t.}\; s_{t}\in\mathcal{G} | s_{0}=\bm{s} \right),
\]
where $T^{\ast}$ is the first time to arrive at a terminal state.
Note that $P_{\bm{s}}^{\mathrm{reach}}(\pi)$ represents the \emph{probability of unsafety}.
Our goal is to compute the minimal probability of unsafety and specify the following \emph{maximal probabilistic safe set} with tolerance $\alpha \in (0,1)$:
\[
	S^* (\alpha) := \{ \bm{s} \in \mathcal{S} \mid \inf_\pi P_{\bm{s}}^{\mathrm{reach}}(\pi)  \leq \alpha \}.
\]
This set can be used for \emph{safety verification}: If the agent is initialized within $S^* (\alpha)$, we can guarantee safety with probability $1 - \alpha$ by carefully steering the agent; otherwise, it is impossible to do so.

We now express the probability of unsafety as an expected sum of stage-wise costs by using the technique proposed in \cite{Summers2010}.
Let $\mathbf{1}_{C}:\mathcal{S}\mapsto\{0,1\}$ denote the indicator function of set $C \subseteq \mathcal{S}$ so that its value is 1 if $s \in C$; otherwise, 0.
Given a sequence of states $\{s_{0},\dots,s_{t}\}$, we observe that
\begin{align*}
	\prod_{k=0}^{t-1}\mathbf{1}_{\mathcal{G}^{c}}(s_{k}) \mathbf{1}_{\mathcal{G}}(s_{t}) =
	\begin{cases}
		1\quad\mathrm{if} \; s_{0},\dots,s_{t-1}\in\mathcal{G}^{c}, s_{t}\in\mathcal{G}
		\\
		0\quad\mathrm{otherwise}.
	\end{cases}
\end{align*}
It is easily seen that the sum of $\prod_{k=0}^{t-1}\mathbf{1}_{\mathcal{G}^{c}}(s_{k}) \mathbf{1}_{\mathcal{G}}(s_{t})$ along the trajectory is equal to 0 if the trajectory is away from $\mathcal{G}$ and 1 if there exists at least one state $s_{t}$ that is in $\mathcal{G}$.
The probability of unsafety under $\pi$ is then given by
\[
	P_{\bm{s}}^{\mathrm{reach}}(\pi) =
	\mathbb{E}^{\pi} \left[ \sum_{t=0}^{T^{\ast}-1} \prod_{k=0}^{t-1}\mathbf{1}_{\mathcal{G}^{c}}(s_{k}) \mathbf{1}_{\mathcal{G}}(s_{t}) \mid s_{0}=\bm{s} \right].
\]

We introduce an auxiliary state $x_{t}$, which is an indicator of whether a trajectory $\{s_{0},\cdots,s_{t-1}\}$ is fully safe or not.
It is defined as
\[
	\begin{aligned}
		&x_{0} = 1, \quad x_{t} = \prod_{k=0}^{t-1} \mathbf{1}_{\mathcal{G}^{c}}(s_{k}),\quad t \geq 1.
	\end{aligned}
\]
Since
	$x_{t+1}=x_{t}\mathbf{1}_{\mathcal{G}^{c}}(s_{t})$,
$x_{t+1}$ depends solely on $(s_{t}, x_{t})$ and $a_t$, so the Markov property holds with respect to the state pair $(s_{t},x_{t})$.
The problem of computing the minimal probability of unsafety can be formulated as
\begin{equation}\label{opt}
	\inf_{\pi \in \Pi} P_{\bm{s}}^{\mathrm{reach}}(\pi)  = \mathbb{E}^{\pi}  \left[ \sum_{t=0}^{T^{\ast}-1} x_{t}\mathbf{1}_{\mathcal{G}}(s_{t}) \mid (s_{0},x_{0})=(\bm{s},1) \right ],
\end{equation}
which is in the form of the standard optimal control problem.
Let $V^{\ast}: \mathcal{S} \times \{0, 1\} \to \mathbb{R}$ denote the optimal value function of this problem, that is, $V^* (\bm{s}, \bm{x}) := \inf_{\pi \in \Pi} \mathbb{E}^{\pi} [  \sum_{t=0}^{T^{\ast}-1} x_{t}\mathbf{1}_{\mathcal{G}}(s_{t}) \mid (s_{0},x_{0})=(\bm{s},\bm{x})  ]$.
After computing the optimal value function, we can obtain the maximal probabilistic safe set by simple thresholding:
\[
	S^* (\alpha) = \{ \bm{s} \in \mathcal{S} \mid V^* (\bm{s}, 1) \leq \alpha \}.
\]
Note that this set is a superset of 
	$S^\pi (\alpha):= \{ \bm{s} \in \mathcal{S} \mid P_{\bm{s}}^{\mathrm{reach}}(\pi) \leq \alpha \} = \{\bm{s} \in \mathcal{S} \mid V^\pi (\bm{s}, 1)\leq \alpha \}$
for any Markov policy $\pi$, where $V^\pi: \mathcal{S} \times \{0,1\}$ denotes the value function of $\pi$ defined by
$V^\pi (\bm{s}, \bm{x}) := \mathbb{E}^{\pi} [  \sum_{t=0}^{T^{\ast}-1} x_{t}\mathbf{1}_{\mathcal{G}}(s_{t}) \mid (s_{0},x_{0})=(\bm{s},\bm{x})  ]$.
To distinguish $S^\pi (\alpha)$ from $S^* (\alpha)$, we refer to the former as the (probabilistic) safe set under $\pi$.

\subsection{Safe Reinforcement Learning}

Our goal is to compute the minimal probability of unsafety and the maximal probabilistic safe set without the knowledge of state transition probabilities in a \emph{safety-preserving} manner.
We propose an RL algorithm that guarantees the safety of the agent during the learning process for safety specification.
More specifically, the sequence $\{\pi_k\}_{k=0, 1, \ldots}$ generated by the proposed RL algorithm satisfies
\begin{equation}\label{const}
	P_{\bm{s}}^{\mathrm{reach}} (\pi_{k+1}) \leq \alpha \quad \forall \bm{s} \in S^{\pi_k} (\alpha)
\end{equation}
for $k=0, 1, \ldots$.
This constraint ensures that
\[	
	S^{\pi_k} (\alpha) \subseteq S^{\pi_{k+1}} (\alpha),
\]
that is, the probabilistic safe set (given $\alpha$) monotonically expands. We also use the constraint~\eqref{const} to perform \emph{safe exploration} to collect sample data by preserving safety in a probabilistic manner.

\section{Lyapunov-Based Safe Reinforcement Learning for  Safety Specification}\label{sec:method}

To determine the set of safe policies that satisfy \eqref{const}, we adopt the Lyapunov function proposed in \cite{chow2018lyapunov} and enhance the approach to incentivize the agent to explore the state space efficiently. 

Throughout the section, we assume that every terminal state lies in $S^{\ast}(\alpha)$ and that, at all events, an agent arrives at a terminal state in a finite period.
Thus, there exists an integer $m$ such that $\mathbb{P}^\pi (s_{m}\in \mathcal{S}_{\mathrm{term}} ; s_{0}=\bm{s}) > 0$ $\forall \bm{s}\in \mathcal{S}, \forall \pi\in\Pi.$
In Section~\ref{sec:lyapunov_approach} and~\ref{sec:ess}, the state space $\mathcal{S}$ and the action space $\mathcal{A}$ are assumed to be finite. This assumption will be relaxed when discussing the deep RL version in Section~\ref{sec:deeprl}.

Let $\mathcal{T}_{d}^{\pi}$ denote the stationary Bellman operator for the cost function $d( \bm{s}, \bm{x}) := \bm{x}\mathbf{1}_{\mathcal{G}}(\bm{s})$
\begin{equation} \nonumber
	\begin{split}
		&(\mathcal{T}_{d}^{\pi} V)( \bm{s}, \bm{x}) := d(\bm{s}, \bm{x})  \\
		&+ \sum_{\bm{a} \in A}\pi( \bm{a}| \bm{s}, \bm{x})\sum_{\bm{s}' \in S} p(\bm{s}'| \bm{s}, \bm{a}) V(\bm{s}',\bm{x}\mathbf{1}_{\mathcal{G}^{c}}(\bm{s}))
	\end{split}
\end{equation}
for all $(\bm{s}, \bm{x}) \in \mathcal{S}' \times \{0,1\}$, and
\[
	(\mathcal{T}_{d}^{\pi}V)( \bm{s}, \bm{x}) :=  0
\]
for all $(\bm{s}, \bm{x}) \in \mathcal{S}_{\mathrm{term}} \times \{0,1\}$.
Note that $\mathcal{T}_d^\pi$ is an $m$-stage contraction with respect to $\| \cdot \|_\infty$ for all $(\bm{s}, \bm{x}) \in \mathcal{S}' \times \{0,1\}$.

\subsection{Lyapunov Safety Specification}\label{sec:lyapunov_approach}

We adopt the following definition of Lyapunov functions,  proposed in~\cite{chow2018lyapunov}:
\begin{definition}
	A function $L:S\times\{0,1\}\mapsto[0,1]$ is said to be a \emph{Lyapunov function} with respect to a Markov policy $\pi$ if it satisfies the following conditions:
	\begin{subequations}
		\begin{align}
		( \mathcal{T}_{d}^{\pi} L)( \bm{s}, \bm{x}) &\leq L(\bm{s}, \bm{x}) \quad
		\forall (\bm{s}, \bm{x}) \in \mathcal{S}\times\{0,1\} \label{condition:lyapunov}
		\\
		L(\bm{s},1) &\leq \alpha
		\quad  \forall \bm{s} \in S_{0}, \label{condition:safety}
		\end{align}
	\end{subequations}
	where $S_{0}$ is a given subset of $S^{\ast}(\alpha)$ and $d(s,x) := x \mathbf{1}_{\mathcal{G}}(s)$.
\end{definition}
Inequalities (\ref{condition:lyapunov}) and (\ref{condition:safety}) are called the \emph{Lyapunov condition} and the \emph{safety condition}, respectively.
We can show that if an arbitrary policy $\tilde{\pi}$ satisfies the Lyapunov condition, then the probability of unsafety at $S_{0}$ does not exceed the threshold $\alpha$.
To see this, we recursively apply $\mathcal{T}_d^\pi$ on both sides of \eqref{condition:safety} and use \eqref{condition:lyapunov} and the monotonicity of $\mathcal{T}_d^\pi$ to obtain that, for any $\bm{s}\in S_{0}$,
\begin{align}\label{eqn:monotonicity}
	&\alpha \geq L(\bm{s},1) \geq ( \mathcal{T}_{d}^{\tilde{\pi}} L)(\bm{s},1) \geq ( ( \mathcal{T}_{d}^{\tilde{\pi}} )^{2} L)(\bm{s},1) \geq \cdots.
\end{align}
 has a unique fixed point, which corresponds to the probability of unsafety, 

Due to the $m$-stage contraction property, $\left( \mathcal{T}_{d}^{\tilde{\pi}} \right)^{m}$ has a unique fixed point that corresponds to the probability of unsafety, $P_{\bm{s}}^{\mathrm{reach}} (\tilde{\pi}) = V^{\tilde{\pi}} (\bm{s}, 1)$, under $\tilde{\pi}$.
Therefore, by the Banach fixed point theorem, we have
\begin{align}\label{eqn:contractivity}
	&\alpha \geq \lim_{k\rightarrow\infty} \left( ( \mathcal{T}_{d}^{\tilde{\pi}})^{km} L  \right )(\bm{s},1) = V^{\tilde{\pi}}(\bm{s},1)
\quad \forall \bm{s}\in S_{0}.
\end{align}

Given a Lyapunov function $L$, consider the set $\{\tilde{\pi} \mid (\mathcal{T}_{d}^{\tilde{\pi}} L)(\bm{s},1) \leq \alpha \; \forall \bm{s} \in S_0 \}$.
Then, any policy $\tilde{\pi}$ in this set satisfies the probabilistic safety condition $P_{\bm{s}}^{\mathrm{reach}}(\tilde{\pi})\leq\alpha$ for all $\bm{s}\in S_{0}$ by \eqref{eqn:contractivity}.
Thus, when $S_0$ is chosen as ${S}^{\pi_k} (\alpha)$, the safety constraint \eqref{const} is satisfied.
This set of safe policies is called the \emph{L-induced policy set}.

We can now introduce the Lyapunov safety specification method.
For iteration $k$, we construct the Lyapunov function $L_{k}$ by using the current policy $\pi_{k}$ and update the policy to $\pi_{k+1}$ taken from the $L_{k}$-induced policy set.
Specifically, we set
\begin{equation*}
	L_{k}(\bm{s}, \bm{x}) := \mathbb{E}^{\pi_{k}} \left[ \sum_{t=0}^{T^{\ast}-1} (d+\epsilon_{k})(s_{t},x_{t}) \mid (s_{0},x_{0})=(\bm{s}, \bm{x}) \right],
\end{equation*}
where $\epsilon_{k}:\mathcal{S}\times\{0,1\}\mapsto\mathbb{R}_{\geq 0}$ is an auxiliary cost function.
Following the cost-shaping method of \cite{chow2018lyapunov}, we define the auxiliary cost as the function 
\[
	\epsilon_{k}(\bm{x}) := \bm{x} \cdot \min_{\bm{s} \in S_{0}} \; \frac{\alpha - V^{\pi_{k}}(\bm{s},1)}{T^{\pi_{k}}(\bm{s},1)},
\]
where $T^{\pi_{k}}(\bm{s},\bm{x})$ is the expected time for an agent to reach $\mathcal{G}$ or $\mathcal{S}_{\mathrm{term}}$ the first time under policy $\pi_{k}$ and initial state $(\bm{s},\bm{x})$.
We refer to $T^{\pi_{k}}(\bm{s},1)$ as the \emph{first-hitting time} for the rest of this article.
It is straightforward to check that the Lyapunov condition~\eqref{condition:lyapunov} is satisfied with $L_k$. 
Furthermore, the function $L_{k}$ satisfies the safety condition \eqref{condition:safety} because, for all $\bm{s} \in S_{0}$,
\begin{equation*}
	\begin{aligned}
		L_{k}(\bm{s},1) &\leq V^{\pi_{k}}(\bm{s},1) + \epsilon_{k}(1)T^{\pi_k}(\bm{s},1)
		\\
		&\leq V^{\pi_{k}}(\bm{s},1) + T^{\pi_k}(\bm{s},1)\cdot\frac{\alpha - V^{\pi_{k}}(\bm{s},1)}{T^{\pi_k}(\bm{s},1)} \leq \alpha.
	\end{aligned}
\end{equation*}
Therefore, $L_k$ is a Lyapunov function.

In the policy improvement step, we select $\pi_{k+1}$ from the $L_k$-induced policy set so the updated policy is both safe and has an expanded probabilistic safe set.
\begin{proposition}\label{prop1}
	Suppose that $\pi_{k+1}$ is chosen in $\{ {\pi} \mid (\mathcal{T}_{d}^{{\pi}} L)(\bm{s},1) \leq \alpha \; \forall \bm{s} \in S^{\pi_k} (\alpha) \}$.
	Then, we have
	\[
		P_{\bm{s}}^{\mathrm{reach}} (\pi_{k+1}) \leq \alpha \quad \forall \bm{s} \in S^{\pi_k} (\alpha),
	\]
	and
	\[
		S^{\pi_k} (\alpha) \subseteq S^{\pi_{k+1}} (\alpha).
	\]
\end{proposition}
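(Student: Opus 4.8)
The plan is to read Proposition~\ref{prop1} as an instantiation of the Lyapunov development preceding it, taking the generic set $S_0$ to be $S^{\pi_k}(\alpha)$. First I would check that this is an admissible choice of $S_0$: the containment $S^{\pi_k}(\alpha)\subseteq S^{\ast}(\alpha)$ is recorded in Section~\ref{sec:setup}, and $S^{\pi_k}(\alpha)$ is nonempty since every terminal state $\bm{s}$ has $V^{\pi_k}(\bm{s},1)=0\le\alpha$; hence $\epsilon_k$ is well defined and, as the text already verifies, the cost-shaped function $L_k$ (denoted $L$ in the statement) satisfies the Lyapunov condition~\eqref{condition:lyapunov} and the safety condition~\eqref{condition:safety} with $S_0=S^{\pi_k}(\alpha)$, so in particular $L_k(\bm{s},1)\le\alpha$ for every $\bm{s}\in S^{\pi_k}(\alpha)$. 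It is also worth recording that $S^{\pi_k}(\alpha)\cap\mathcal{G}=\emptyset$, because $V^{\pi}(\bm{s},1)\ge 1>\alpha$ on $\mathcal{G}$ for any $\pi$; this is why the Bellman recursion incurs zero stage cost at the states of interest.

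For the first assertion I would invoke, with $\tilde\pi:=\pi_{k+1}$, the fact already established just after~\eqref{eqn:contractivity}: every policy in the $L_k$-induced policy set for $S_0$ satisfies $P_{\bm{s}}^{\mathrm{reach}}(\cdot)\le\alpha$ on $S_0$. Concretely, since $\pi_{k+1}$ lies in this set for $S_0=S^{\pi_k}(\alpha)$, the argument around~\eqref{eqn:monotonicity}--\eqref{eqn:contractivity} applies: repeatedly applying $\mathcal{T}_{d}^{\pi_{k+1}}$ to the safety inequality $L_k(\bm{s},1)\le\alpha$ and using monotonicity produces, at each $(\bm{s},1)$ with $\bm{s}\in S^{\pi_k}(\alpha)$, a non-increasing sequence of iterates every term of which is at most $\alpha$; and because $\mathcal{T}_{d}^{\pi_{k+1}}$ is an $m$-stage contraction under $\|\cdot\|_\infty$ on $\mathcal{S}'\times\{0,1\}$, the Banach fixed point theorem applied to $(\mathcal{T}_{d}^{\pi_{k+1}})^{m}$ lets us pass to the limit along the subsequence $(\mathcal{T}_{d}^{\pi_{k+1}})^{jm}L_k$, whose limit is the unique fixed point $V^{\pi_{k+1}}$. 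This yields $V^{\pi_{k+1}}(\bm{s},1)=P_{\bm{s}}^{\mathrm{reach}}(\pi_{k+1})\le\alpha$ for all $\bm{s}\in S^{\pi_k}(\alpha)$, which is the first claim and coincides with the safety constraint~\eqref{const}.

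The second assertion follows immediately from the identity $S^{\pi}(\alpha)=\{\bm{s}\in\mathcal{S}\mid V^{\pi}(\bm{s},1)\le\alpha\}$ recorded in Section~\ref{sec:setup}: by the first part every $\bm{s}\in S^{\pi_k}(\alpha)$ has $V^{\pi_{k+1}}(\bm{s},1)\le\alpha$ and hence lies in $S^{\pi_{k+1}}(\alpha)$, so $S^{\pi_k}(\alpha)\subseteq S^{\pi_{k+1}}(\alpha)$.

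I do not expect a genuine obstacle, since the substantive step---that membership in an $L$-induced policy set forces probabilistic feasibility on $S_0$---is already in place before the proposition, and the proof is essentially a matter of substituting $S_0=S^{\pi_k}(\alpha)$ and reading off the two conclusions. The one point needing care is the limiting step: because $\mathcal{T}_{d}^{\pi_{k+1}}$ contracts only every $m$ stages, one must pass to the limit along $(\mathcal{T}_{d}^{\pi_{k+1}})^{jm}L_k$ rather than $(\mathcal{T}_{d}^{\pi_{k+1}})^{j}L_k$ and then use monotonicity of the full iterate sequence to transfer the bound $\le\alpha$ to $V^{\pi_{k+1}}$, exactly as in the derivation of~\eqref{eqn:contractivity}; a minor secondary point is that all of these inequalities are needed only at states $(\bm{s},1)$ with $\bm{s}\in S^{\pi_k}(\alpha)$, which is precisely what both conclusions require.
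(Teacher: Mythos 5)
Your proof is correct and takes essentially the same route as the paper's: the paper's (very terse) proof just cites \eqref{eqn:contractivity} for the first claim and the sublevel-set characterization $S^{\pi}(\alpha)=\{\bm{s}\mid V^{\pi}(\bm{s},1)\le\alpha\}$ for the second, and you are simply filling in the monotone-iteration and $m$-stage-contraction details with $S_0=S^{\pi_k}(\alpha)$. The only caveat---one you share with the paper---is that the non-increasing chain in \eqref{eqn:monotonicity} actually uses the full Lyapunov condition $(\mathcal{T}_{d}^{\pi_{k+1}}L_k)\le L_k$ at all states (which the policy-improvement constraint in \eqref{eqn:lyapunov_policy_improvement} does enforce), not merely the membership condition $(\mathcal{T}_{d}^{\pi_{k+1}}L_k)(\bm{s},1)\le\alpha$ on $S^{\pi_k}(\alpha)$ stated in the proposition's hypothesis.
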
 
\begin{proof}
	The probabilistic safety of $\pi_{k+1}$ follows from \eqref{eqn:contractivity}.
	This also implies that for an arbitrary $\bm{s} \in S^{\pi_k} (\alpha)$, we have $\bm{s} \in S^{\pi_k} (\alpha)$. Therefore, the result follows.
\end{proof}

To achieve the minimal probability of unsafety, we choose $\pi_{k+1}$ as the ``safest'' one in the $L_{k}$-induced policy, that is, 
\begin{equation}\label{eqn:lyapunov_policy_improvement}
\begin{split}
	&\pi_{k+1}(\cdot| 
	\bm{s})\\
	& \in \argmin_{\pi(\cdot| \bm{s} )} \{ ( \mathcal{T}_{d}^{\pi} V_{k} )(\bm{s}, 1) \mid  (\mathcal{T}_{d}^{\pi} L_{k} ) (\bm{s}, 1) \leq L_{k}(\bm{s}, 1) \}.
	\end{split}
\end{equation}
Note that the value of Lyapunov function is 0 at $\bm{x}=0$, so we need not compute a policy for $\bm{x} = 0$·

As the MDP model is unknown, we approximate the value function of a policy using sample trajectories.
We also use Q-learning to obtain a reliable estimate of state-action value functions.
Let $Q_{V}$ and $Q_{T}$ denote the Q-functions for the probability of unsafety and a first-hitting time, respectively.
Given $(s_t, a_t, s_{t+1})$ obtained by executing $\pi_k$, the Q-functions are updated as follows:
\begin{equation}\label{eqn:qlearning_value_computation}
	\begin{split}
		&\begin{aligned}
			Q_{V}(s_{t},a_{t}) \leftarrow \mathbf{1}_{\mathcal{G}}(s_{t}) + \mathbf{1}_{\mathcal{G}^{c}}(s_{t}) \bigg [ (1-\tau_{l}) Q_{V}(s_{t},a_{t})& 
			\\
			+ \tau_{l} \sum_{\bm{a} \in A}\pi_{k}( \bm{a}|s_{t+1})Q_{V}(s_{t+1}, \bm{a}) \bigg ]&
		\end{aligned}
		\\
		&\begin{aligned}
			Q_{T}(s_{t},a_{t}) \leftarrow \mathbf{1}_{\mathcal{G}^c}(s_{t}) \bigg[&
			\tau_{l} \bigg( 1 + \sum_{\bm{a} \in A}\pi_{k}(\bm{a}|s_{t+1})Q_{T}(s_{t+1}, \bm{a}) \bigg)
			\\
			&+ (1-\tau_{l}) Q_{T}(s_{t},a_{t}) \bigg ],
		\end{aligned}
	\end{split}
\end{equation}
where $\tau_{l}(\bm{s}, \bm{a})$ is the learning rate satisfying $\sum_{l}\tau_{l}(\bm{s}, \bm{a})=\infty$ and $\sum_{l}\tau_{l}^{2}(\bm{s}, \bm{a})<\infty$.
We can also rewrite \eqref{eqn:lyapunov_policy_improvement} as the following linear program associated with Q-functions:
\begin{equation}\label{eqn:qlearning_policy_improvement}
	\begin{aligned}
		\min_{\pi(\cdot | \bm{s})} \;&  \sum_{\bm{a} \in \mathcal{A}} \pi(\bm{a} | \bm{s}) Q_{V,k}(\bm{s}, \bm{a})
		\\
		\mathrm{s.t.} \; & \sum_{\bm{a} \in \mathcal{A}} Q_{L,k}(\bm{s}, \bm{a}) (\pi(\bm{a} | \bm{s}) - \pi_{k}(\bm{a} | \bm{s})) \leq \epsilon_{k},
	\end{aligned}
\end{equation}
where $Q_{L,k}$ is the Q-value of Lyapunov function given by $Q_{L,k}(\bm{s}, \bm{a})=Q_{V,k}(\bm{s}, \bm{a}) + \epsilon_{k}(1) Q_{L,k}(\bm{s}, \bm{a})$ and $\epsilon_{k}$ is the shortened expression of $\epsilon_{k}(1)$.
The policy $\pi_{k+1}(\cdot|\bm{s})$ is then updated as the optimal solution of the linear program \eqref{eqn:qlearning_policy_improvement}.

Combining the policy evaluation and the policy improvement steps of Q-functions, we construct the \emph{Lyapunov safety specification} (LSS) as described in Algorithm~\ref{alg:tabular_lss}.
The convergence property of Q-learning in finite-state, finite-action space is well studied in \cite{tsitsiklis1994asynchronous}, so we omit the theoretical details here.
Under the standard convergence condition for Q-learning, the algorithm obtains a sequence of policies that satisfy Proposition \ref{prop1}.

\begin{algorithm}[t]
	\caption{LSS Q-Learning}
	\label{alg:tabular_lss}
	\begin{algorithmic}[1]
		\REQUIRE Tolerance for unsafety $\alpha \in (0,1)$,\\ baseline policy $\pi_{\mathrm{base}}$;
				\STATE Set initial policy $\pi_{0}$ as $\pi_{\mathrm{base}}$;
		\FOR{each iteration $k$}
		\FOR{each environment step $l$}
		\STATE $a_{t} \sim \pi_{k}(\cdot|s_{t})$
		\STATE Get $s_{t+1} \sim p(\cdot|s_{t},a_{t})$ and $\mathbf{1}_{\mathcal{G}}(s_{t})$;
		\STATE Update $Q_{V}(s_{t},a_{t})$, $Q_{T}(s_{t},a_{t})$ as (\ref{eqn:qlearning_value_computation});
		\STATE Reset the environment if $\mathbf{1}_{\mathcal{G}}(s_{t})=1$;
		\ENDFOR
		\STATE Update $\pi_{k+1}( \cdot | \bm{s})$ by solving \eqref{eqn:qlearning_policy_improvement} for each $\bm{s}$;
		\ENDFOR
	\end{algorithmic}
\end{algorithm}

\subsection{Efficient Safe Exploration}\label{sec:ess}

In this subsection, we develop a novel for safe exploration to efficiently solve a probabilistic safety specification problem.
We can utilize the Lyapunov constraint to construct a policy that takes potentially dangerous actions with adequate probability and thus assures safe navigation.

We take our motivation from the discovery that if a state is falsely assumed to have a high probability of unsafety, it is unlikely to correct the misconception without taking exploratory actions.
Consider the table of Q-value estimates used in the LSS algorithm.
The Q-learning agent is initiated from the blank slate, so it is a safe choice to assume that all unvisited states evolve into the target set with high probability.
As a result, the safe policy derived from the algorithm tends to confine an agent inside the current safe set.
With enough time, the Q-value table becomes accurate at all states, but this is unattainable in practice.
Therefore, it is crucial to explore the unidentified states, and this process involves visiting the exterior of the safe set.

In this regard, we choose the exploratory policy to be the most aggressive among the set of policies that guarantee safety in the safe set.
Conversely, the probabilistic safety of the exploratory policy in the safe set is marginally greater than the tolerance.
As there is no element $\mathcal{G}$ in $S^{\pi_{s}}(\alpha)$, such a policy is likely to bring an agent outside the safe set.
The exploratory policy is efficient if used with an experience replay, the state distribution of which may diverge from the true distribution due to the scarcity of samples obtained in the exterior of the safe set.
Our exploratory policy can mitigate the approximation error due to the discrepancy.

\begin{figure}[!t]
	\centering
	\begin{subfigure}[b]{.9\columnwidth}
		\centering
		\includegraphics[width=\linewidth]{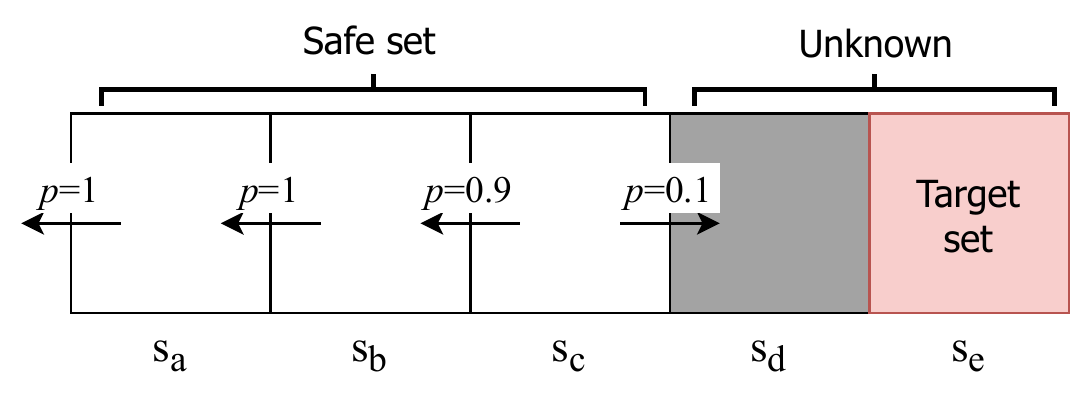}
	\end{subfigure}
	\hfil
	\centering
	\begin{subfigure}[b]{.9\columnwidth}
		\centering
		\includegraphics[width=\linewidth]{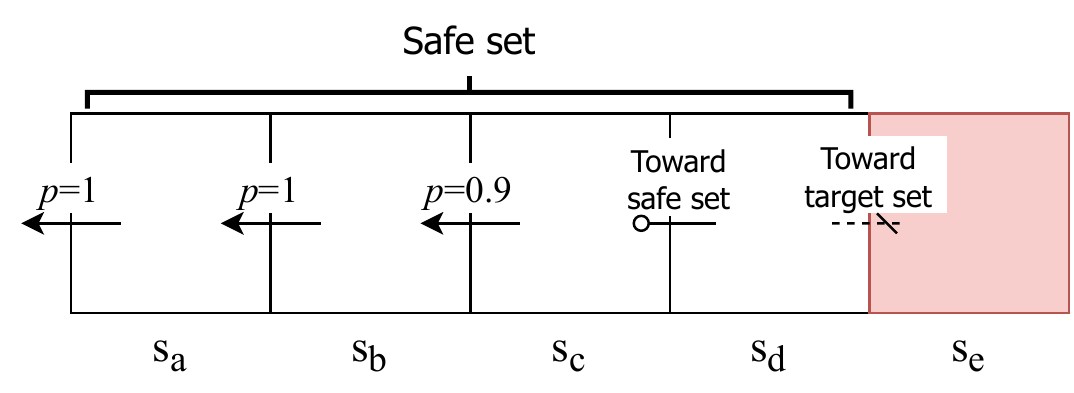}
	\end{subfigure}
	\caption{An example of safe exploration on a one-dimensional grid world. The confidence level is set to $0.9$. Boxes represent states, and arrows toward the left or right symbolize the policies at each state. Unexamined states are shaded; the gray one is not in the target set, but it is considered unsafe. Choosing the policy at $\bm{s}_{c}$ allows an agent to explore toward $\bm{s}_{d}$ (top). As the RL agent successfully returns to the safe set after visiting $\bm{s}_{d}$ with high probability, $\bm{s}_{d}$ is added to the safe set (bottom).}
	\label{fig:ess_example}
\end{figure}

To illustrate our idea, we show a one-dimensional (1D) grid world consisting of five states $\bm{s}_{a}, \dots, \bm{s}_{e}$ and two actions $(\mathrm{left}, \mathrm{right})$ as in Fig. \ref{fig:ess_example}.
We know from experience that moving to the left at $\bm{s}_{a},\dots, \bm{s}_{c}$ guarantees 100\% safety.
The states $\bm{s}_{d}$ and $\bm{s}_{e}$ are not visited yet, so the probabilities of unsafety at those states are 1.
Suppose the agent is in $\bm{s}_{c}$ and chooses to move left or right with probability $(1-\alpha, \alpha)$.
The probability of unsafety of $\pi$ is then no more than $\alpha$ because an agent never reaches $\bm{s}_{d}$ or $\bm{s}_{e}$ with probability $1-\alpha$.
Also, if an agent successfully reaches $\bm{s}_{d}$ or $\bm{s}_{e}$ and returns safely, we obtain an accurate estimate of the probability of unsafety and expand the safe set.

\begin{algorithm}[!t]
	\caption{ESS Q-Learning}
	\label{alg:tabular_ess}
	\begin{algorithmic}
		\REQUIRE Tolerance for unsafety $\alpha \in (0,1)$, \\baseline policy $\pi_{\mathrm{base}}$;
				\STATE Set  $\pi_{s,0} \leftarrow \pi_{\mathrm{base}}$ and $\pi_{e,0}\leftarrow \pi_{\mathrm{base}}$;
		\FOR{each iteration $k$}
		\FOR{each environment step $l$}
		\STATE $a_{t} \sim \pi_{e,k}(\cdot|s_{t})$;
		\STATE Get $s_{t+1} \sim p(\cdot|s_{t},a_{t})$ and $\mathbf{1}_{\mathcal{G}}(s_{t})$;
		\STATE Update $Q_{V}(s_{t},a_{t})$, $Q_{T}(s_{t},a_{t})$ as \eqref{eqn:qlearning_value_computation};
		\STATE Reset environment if $\mathbf{1}_{\mathcal{G}}(s_{t})=1$;
		\ENDFOR
 \STATE Set $\pi_{s,k+1}(\cdot| \bm{s})$  by solving \eqref{eqn:qlearning_policy_improvement} for each $\bm{s}$;
		\STATE Set $\pi_{e,k+1}(\cdot| \bm{s})$ by solving \eqref{eqn:exploratory_policy_improvement} for each $\bm{s}$;
		\ENDFOR
	\end{algorithmic}
\end{algorithm}

A policy suitable for exploration is not usually the safest policy; therefore,  we separate the \emph{exploratory policy} $\pi_{e}$ from the policy that constructs the safe set, which is denoted by the \emph{safety specification-policy} (SS-policy) $\pi_{s}$.
Unlike the SS-policy, the exploratory policy drives an agent around the boundary of the safe set.
To construct $\pi_e$ in a formal way, we exploit a given $\pi_s$ and the Lyapunov function $L$ defined as in Section \ref{sec:lyapunov_approach}.
First, consider the following policy optimization problem:
\begin{equation}\label{def:exploratory_policy_optimization}
	\begin{aligned}
		\max_{\pi\in\Pi}~ & V^{\pi}(s_0,1)\\
		\mathrm{s.t.}~ & (\mathcal{T}_{d}^{\pi} L)(\bm{s}, \bm{x}) \leq L(\bm{s}, \bm{x}) \quad \forall (\bm{s}, \bm{x}) \in \mathcal{S}\times\{0,1\},
	\end{aligned}
\end{equation}
where $s_0$ is an initial state.
Note that this is the auxiliary problem merely to construct the exploratory policy with no connection to the original problem \eqref{opt}.
As stated above, the exploratory policy should preserve safety confidence in the safe set under the SS-policy, that is, $V^{\pi_{e}}(\bm{s},1) \leq \alpha,~\forall \bm{s} \in S^{\pi_{s}}(\alpha)$.
The solution of \eqref{def:exploratory_policy_optimization} satisfies this condition because of the Lyapunov constraint, but it can be suboptimal because the constraint in \eqref{def:exploratory_policy_optimization} is stronger than the original.
However, by using the Lyapunov constraints, we can enjoy the benefit of using dynamic programming to solve \eqref{def:exploratory_policy_optimization}. 

\begin{proposition}\label{prop2}
	Let $L$ be the Lyapunov function stated in \eqref{def:exploratory_policy_optimization}.
An optimal solution of \eqref{def:exploratory_policy_optimization} can be obtained by the value iteration using the Bellman operator
	\begin{equation}\nonumber
	\begin{split}
		&(\mathcal{T}_{\mathrm{exp}} V)(\bm{s}, \bm{x})\\
		& := \max_{\pi(\cdot|\bm{s})} \{ (\mathcal{T}_{d}^{\pi} V)(\bm{s}, \bm{x}) \mid (\mathcal{T}_{d}^{\pi} L)(\bm{s}, \bm{x}) \leq L(\bm{s}, \bm{x}) \}.
	\end{split}
	\end{equation}
	Specifically, the value function that satisfies $\mathcal{T}_{\mathrm{exp}} V = V$ is the probability of unsafety under such a policy.
\end{proposition}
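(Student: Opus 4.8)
The plan is to show that the operator $\mathcal{T}_{\mathrm{exp}}$ is well-defined, monotone, and an $m$-stage contraction with respect to $\|\cdot\|_\infty$ on $\mathcal{S}' \times \{0,1\}$, so that value iteration converges to its unique fixed point; then to identify that fixed point with the value function of the policy that attains the per-state maxima, and finally to argue that this fixed point is in fact the optimal value of \eqref{def:exploratory_policy_optimization}. First I would note that the feasible set $\{\pi(\cdot|\bm{s}) \mid (\mathcal{T}_{d}^{\pi} L)(\bm{s},\bm{x}) \leq L(\bm{s},\bm{x})\}$ is nonempty (it contains the policy $\pi_s$ used to build $L$, since $L$ is a Lyapunov function with respect to it) and is a polytope in $\pi(\cdot|\bm{s})$, so the maximum in the definition of $(\mathcal{T}_{\mathrm{exp}}V)(\bm{s},\bm{x})$ is attained; this makes $\mathcal{T}_{\mathrm{exp}}$ a legitimate operator on bounded functions.

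Next I would establish monotonicity: if $V_1 \leq V_2$ pointwise, then for each fixed feasible $\pi(\cdot|\bm{s})$ we have $(\mathcal{T}_{d}^{\pi}V_1)(\bm{s},\bm{x}) \leq (\mathcal{T}_{d}^{\pi}V_2)(\bm{s},\bm{x})$ because $\mathcal{T}_{d}^{\pi}$ averages $V$ against nonnegative transition probabilities; taking the maximum over the (same) feasible set preserves the inequality, so $\mathcal{T}_{\mathrm{exp}}V_1 \leq \mathcal{T}_{\mathrm{exp}}V_2$. For the contraction property I would use the standard trick: for any two bounded $V_1, V_2$, let $\pi^*$ attain the max for $V_1$ at $(\bm{s},\bm{x})$; since $\pi^*$ is also feasible for $V_2$,
\[
	(\mathcal{T}_{\mathrm{exp}}V_1)(\bm{s},\bm{x}) - (\mathcal{T}_{\mathrm{exp}}V_2)(\bm{s},\bm{x}) \leq (\mathcal{T}_{d}^{\pi^*}V_1)(\bm{s},\bm{x}) - (\mathcal{T}_{d}^{\pi^*}V_2)(\bm{s},\bm{x}),
\]
and symmetrically; the right-hand side is the one-step difference of a fixed-policy Bellman operator whose cost terms cancel, leaving only the propagated value difference. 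Iterating $m$ times and using the stochastic-path assumption (that a terminal state, where the operator returns $0$, is reached within $m$ steps with positive probability, uniformly over states and policies) gives a strict contraction factor for $(\mathcal{T}_{\mathrm{exp}})^m$, exactly as for $\mathcal{T}_d^\pi$. Banach's fixed point theorem then yields a unique fixed point $V_{\mathrm{exp}}$ to which value iteration converges.

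It remains to identify $V_{\mathrm{exp}}$ with the solution of \eqref{def:exploratory_policy_optimization}. Let $\pi_{\mathrm{exp}}$ be the stationary policy that selects, at each $(\bm{s},\bm{x})$, a maximizer in the definition of $\mathcal{T}_{\mathrm{exp}}$ at the fixed point $V_{\mathrm{exp}}$; then $\mathcal{T}_{d}^{\pi_{\mathrm{exp}}} V_{\mathrm{exp}} = \mathcal{T}_{\mathrm{exp}} V_{\mathrm{exp}} = V_{\mathrm{exp}}$, and because $\mathcal{T}_{d}^{\pi_{\mathrm{exp}}}$ is itself an $m$-stage contraction with unique fixed point $V^{\pi_{\mathrm{exp}}}$, we get $V_{\mathrm{exp}} = V^{\pi_{\mathrm{exp}}}$, which is the probability of unsafety under $\pi_{\mathrm{exp}}$; moreover $\pi_{\mathrm{exp}}$ satisfies the Lyapunov constraint by construction, so it is feasible for \eqref{def:exploratory_policy_optimization}. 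For optimality I would show $V^{\pi_{\mathrm{exp}}} \geq V^{\pi}$ for every feasible $\pi$: any such $\pi$ satisfies $\mathcal{T}_{d}^{\pi} L \leq L$, hence is feasible in the per-state maximization, so $\mathcal{T}_{d}^{\pi} V_{\mathrm{exp}} \leq \mathcal{T}_{\mathrm{exp}} V_{\mathrm{exp}} = V_{\mathrm{exp}}$; applying $\mathcal{T}_{d}^{\pi}$ repeatedly and using its monotonicity and $m$-stage contraction to a unique fixed point $V^{\pi}$ gives $V^{\pi} = \lim_{j} (\mathcal{T}_{d}^{\pi})^{jm} V_{\mathrm{exp}} \leq V_{\mathrm{exp}} = V^{\pi_{\mathrm{exp}}}$. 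Combining feasibility and this bound shows $\pi_{\mathrm{exp}}$ is optimal and $V_{\mathrm{exp}}$ is its value.

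The main obstacle I anticipate is the contraction argument for $\mathcal{T}_{\mathrm{exp}}$: unlike a fixed-policy operator, the maximizing policy depends on the argument $V$, so one must carefully use the cross-feasibility (a maximizer for $V_1$ is feasible for $V_2$, because the Lyapunov constraint does not involve the iterate $V$) to reduce the difference to a fixed-policy bound, and then lift the single-step quasi-contraction to a genuine $m$-stage contraction via the uniform reachability of terminal states. A secondary subtlety is checking that boundedness in $[0,1]$ is preserved (or at least that the iterates stay in a complete metric space), which follows since costs $d$ are in $\{0,1\}$, terminal values are $0$, and the feasible set always contains a policy whose value is a genuine probability.
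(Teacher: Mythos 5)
Your argument is correct, and it is essentially a self-contained reconstruction of what the paper outsources to a citation: the paper's own proof simply observes that $\mathcal{T}_{\mathrm{exp}}$ is an instance of the ``safe Bellman operator'' of Chow et al.\ and invokes their Proposition~3 for the monotone-contraction property, then asserts that the fixed point corresponds to the optimizing policy. You instead prove those ingredients directly: nonemptiness of the per-state feasible polytope (it contains $\pi_s$ because $L$ is a Lyapunov function for it), monotonicity, the $m$-stage contraction via cross-feasibility (crucially using that the Lyapunov constraint does not involve the iterate $V$, so a maximizer for $V_1$ remains feasible for $V_2$), Banach's theorem, extraction of a greedy stationary policy whose value equals the fixed point, and---importantly---the optimality step, showing $V^{\pi}\leq V_{\mathrm{exp}}$ for every feasible $\pi$ by iterating $\mathcal{T}_d^{\pi}$ from $V_{\mathrm{exp}}$. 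That last step is exactly the part the paper's one-line ``by the definition of the operator'' glosses over, so your write-up is actually more complete on the optimality claim. The only place where you match the paper's (rather than exceed its) level of rigor is the $m$-stage contraction itself: iterating the max-operator produces a nonstationary composition of maximizing policies, so the contraction factor requires the termination probability within $m$ steps to be bounded away from zero uniformly over policies; in the finite state/action setting this follows from compactness of the policy simplex and continuity, and the paper assumes the analogous property for $\mathcal{T}_d^{\pi}$ without proof, so this is an acceptable omission rather than a gap.
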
 
\begin{proof}
	The operator $\mathcal{T}_{\mathrm{exp}}$ is a special form of the safe Bellman operator defined in \cite{chow2018lyapunov}, which is a monotone contraction mapping by Proposition 3 in \cite{chow2018lyapunov}.
	Thus, there exists a unique fixed point of $\mathcal{T}_{\mathrm{exp}}$.
By the definition of the operator, the fixed point  corresponds to the policy and solves problem \ref{def:exploratory_policy_optimization}.
\end{proof}

As Proposition \ref{prop2} certifies, we can perform the Bellman operation on $V^{\pi_{s}}$ iteratively to obtain $\pi_{e}$, which is the solution of \eqref{def:exploratory_policy_optimization}.
However, in the RL domain, it is difficult to reproduce the whole dynamic programming procedure, since each Bellman operation corresponds to a time-consuming Q-value computation.
We thus apply the Bellman operation once to obtain $\pi_{e}(\cdot|\bm{s})$ at iteration number $k$ as
\begin{equation}\label{eqn:exploratory_policy_improvement}
	\mathrm{arg}\max_{\pi(\cdot| \bm{s})} \{ (\mathcal{T}_{d}^{\pi} V_{k})(\bm{s},1) \mid (\mathcal{T}_{d}^{\pi} L_{k})(\bm{s}, \bm{x}) \leq L_{k}(\bm{s}, \bm{x}) \}.
\end{equation}
To sum up, we add an exploratory policy to LSS to obtain the \emph{exploratory LSS} (ESS), as Algorithm~\ref{alg:tabular_ess}.

\subsection{Deep RL Implementation}\label{sec:deeprl}

Each policy improvement stages in Algorithm \ref{alg:tabular_lss} or \ref{alg:tabular_ess} solves a linear program.
This operation is not straightforward for nontabular implementations. Thus, we provide adaptations of the LSS and ESS for parametrized policies, such as neural networks.
To apply our approach to high-dimensional environments in this section, we assume that the state and action spaces are continuous, which is the general setup in policy gradient (PG) algorithms.
Suppose a generic policy is parameterized with $\theta$, and we rewrite the policy improvement step of the LSS as
\begin{equation}\label{eqn:parameterized_policy_improvement}
	\begin{aligned}
		&\max_{\theta} \int_{\mathcal{A}} - Q_{V}(\bm{s}, \bm{a}) \pi_{\theta}(\bm{a} |\bm{s}) \: \mathrm{d}\bm{a} \quad \mathrm{subject~to}\\
		&\int_{\mathcal{A}} Q_{L}(\bm{s}, \bm{a}) \left( \pi_{\theta}(\bm{a}| \bm{s}) - \pi_{s}(\bm{a}| \bm{s})\right)  \mathrm{d}\bm{a} \leq \epsilon \quad \forall \bm{s}\in \mathcal{S},
	\end{aligned}
\end{equation}
where $\pi_{s}$ is the current SS-policy and $Q_{V}$, $Q_{L}$, and $\epsilon$ are the values defined as the previous section with respect to $\pi_{s}$.

We use Lagrangian relaxation \cite{Bertsekas1999} to form an unconstrained problem.
Ignoring the constraints, the PG minimizes a single objective $\mathbb{E}_{s,a\sim\pi}[Q_{V}(s,a)]$.
The Lyapunov condition is state-wise, so the number of constraints is the same as $|\mathcal{S}|$.
We can replace the constraints with a single one $\max_{\bm{s} \in \mathcal{S}} \int_{\mathcal{A}} Q_{L}(\bm{s},\bm{a}) \left( \pi_{\theta}(\bm{a}|\bm{s}) - \pi_{s}(\bm{a}|\bm{s})\right) \mathrm{d}\bm{a} - \epsilon \leq 0$.
However, one drawback of this formulation is that the Lagrangian multiplier of the max-constraint places excessive weight on the constraint. 
In practice, the LHS of this max-constraint is likely greater than 0 due to the parameterization errors, resulting in the monotonic increase of the Lagrangian multiplier throughout learning.
Therefore, we adopt state-dependent Lagrangian multipliers to have
\begin{equation}\label{eqn:unconstrained_problem}
	\begin{aligned}
		&\min_{\lambda \geq 0} \max_{\theta} \mathbb{E}_{s\sim\rho_{\theta}} \big[ \mathbb{E}_{a\sim\pi_{\theta}}[-Q_{V}(s,a)]\\
		& - \lambda(s) \left(\mathbb{E}_{a\sim\pi_{\theta}}[Q_{L}(s,a)] - \mathbb{E}_{a\sim\pi_{s}}[Q_{L}(s,a)] - \epsilon \right) \big],
	\end{aligned}
\end{equation}
where $\lambda(s)$ is the Lagrangian multiplier at state $s$, and $\rho_{\theta}$ is the discounted state-visiting probability of $\pi_{\theta}$.
We can assume that nearby states have similar $\lambda(s)$. Thus, we can parameterize $\lambda(s)$ as a critic model, as in \cite{bohez2019value}.
Throughout this section, we represent $\omega$ as the parameter of $\lambda$.

Our goal is to find the saddle point of \eqref{eqn:unconstrained_problem}, which is a feasible solution of the original problem \eqref{eqn:parameterized_policy_improvement}.
We apply the gradient descent (ascent) to optimize $\theta$ and $\omega$.
The Q-values that comprise the Lagrangian are, by definition, the functions of the policy parameter $\theta$, but since we incorporate the actor-critic framework, the Q-functions are approximated with critics independent of $\theta$.
In this regard, we obtain the update rules for the safety specification-actor (SS-actor) and the Lagrangian multiplier associated with it as follows:
\begin{subequations}
	\begin{align}
	&\theta_{s} \leftarrow \theta_{s} - \eta_{\theta}\nabla_{\theta} \left( Q_{V}(s_{t},  {a}_{t}) + \lambda_{\omega_{s}}(s_{t}) Q_{L}(s_{t},  {a}_{t}) \right),
	\label{eqn:actor_update}
	\\
	&\begin{aligned}
	\omega_{s} \leftarrow \omega_{s} + \eta_{\omega} \nabla_{\omega}\lambda_{\omega_{s}}(s_{t}) \big( & Q_{L}(s_{t}, {a}_{t}) - \epsilon \\ &- Q_{L}(s_{t}, {a}_{\mathrm{old},t}) \big),
	\end{aligned}
	\label{eqn:lagrangian_update}
	\end{align}
\end{subequations}
where ${a}_{t}\sim\pi_{\theta_{s}}(s_{t})$ and ${a}_{\mathrm{old},t}$ denotes the sampled action from the policy parametrized with the old $\theta_{s}$.

We apply the same approach to improve the exploratory actor.
The unconstrained problem is similar to \eqref{eqn:unconstrained_problem} except for the opposite sign of the primal objective, so we have
\begin{subequations}
	\begin{align}
	&\theta_{e} \leftarrow \theta_{e} + \eta_{\theta}\nabla_{\theta} \left( Q_{V}(s_{t}, {a}_{t}) - \lambda_{\omega_{e}}(s_{t}) Q_{L}(s_{t}, {a}_{t}) \right)
	\label{eqn:exploratory_actor_update}
	\\
	&\begin{aligned}
	\omega_{e} \leftarrow \omega_{e} + \eta_{\omega} \nabla_{\omega}\lambda_{\omega_{e}}(s_{t}) \big( & Q_{L}(s_{t}, {a}_{\mathrm{exp},t}) - \epsilon \\ &- Q_{L}(s_{t},  {a}_{t}) \big),
	\end{aligned}
	\label{eqn:exploratory_lagrangian_update}
	\end{align}
\end{subequations}
where ${a}_{\mathrm{exp},t}\sim\pi_{\theta_{e}}(s_{t})$, ${a}_{t}\sim\pi_{\theta_{s}}(s_{t})$.

Besides, critic parameters are optimized to minimize the Bellman residual.
The scheme is analogous to the Q-learning version, as in \eqref{eqn:qlearning_value_computation}, but in this case, we express the discount factor $\gamma$.
Recall that the Lyapunov Q-function is a weighted sum of the two Q-functions $Q_{V}$ and $Q_{T}$, one for a probability of unsafety and the other for a first-hitting time, respectively.
Letting $\phi$ and $\psi$ represent the parameters of $Q_{V}$ and $Q_{T}$, the targets for the critics $Q_{\phi}$ and $Q_{\psi}$ are defined as
\begin{align*}
&y_{V} := \mathbf{1}_{\mathcal{G}}(s_{t}) + \mathbf{1}_{\mathcal{G}^{c}}(s_{t})\gamma Q_{\phi'}(s_{t+1}, {a}_{t+1})
\\
&y_{T} := \mathbf{1}_{\mathcal{G}^{c}}(s_{t})(1 + \gamma Q_{\psi'}(s_{t+1}, {a}_{t+1})),
\end{align*}
where ${a}_{t+1}$ is the action sampled from $\pi_{\theta_{s}'}(s_{t+1})$.
The proposed actor-critic algorithm is summarized in Algorithm~\ref{alg:actor_critic_lyapunov}.

In our experiments, we use the double Q-learning technique in \cite{hasselt2010double} to prevent the target $y_{V}$ from being overly greater than the true probability of unsafety.
In this case, two critics have independent weights $\phi_{1}$, $\phi_{2}$, and two target critics pertained to the respective critics.
That is, $Q_{\phi'}(s_{t+1}, {a}_{t+1})$ in $y_{V}$ is replaced with $\min_{j=1,2}  Q_{\phi'_{j}}(s_{t+1},{a}_{t+1})$,
where ${a}_{t+1} \sim \pi_{\theta_{s}'}(s_{t+1})$.
Moreover, we adjust the experience replay to alleviate errors in $Q_{V}$.
Catastrophic forgetting is the primary concern, since the target set should be precisely specified to obtain safe policies.
We fix the ratio of safe samples (\textit{i.e.}, $s_{t} \notin \mathcal{G}$) and unsafe samples (\textit{i.e.}, $s_{t} \in \mathcal{G}$) in a minibatch so that the value of $Q_{V}$ is 1 in the identified states of the target set.
We explain the ancilliary techniques in Section \ref{sec:deeprl_result}.

\begin{algorithm}[!t]
	\caption{Actor-critic LSS (ESS)}
	\label{alg:actor_critic_lyapunov}
	\begin{algorithmic}
		\REQUIRE Tolerance for unsafety $\alpha \in (0,1)$; 
		\STATE Initialize SS-actor/critics $\pi_{\theta_{s}}, Q_{\phi}, Q_{\psi}$ and Lagrangian multiplier $\lambda_{\omega_{s}}$;
		\IF {ESS}
		\STATE Initialize exploratory actor $\pi_{\theta_{e}}$ and Lagrangian multiplier $\lambda_{\omega_{e}}$;
		\ENDIF
		\STATE Initialize target networks:
		$\theta_{s}'\leftarrow\theta_{s}$, $\psi'\leftarrow\psi$, $\phi'\leftarrow\phi$;
		\FOR{each iteration $t$}
		\FOR{each environment step}
		\STATE $a_{t} \sim \pi_{\theta_{s}}(\cdot|s_{t})$ (Use $\pi_{\theta_{e}}$ if ESS);
		\STATE $s_{t+1} \sim p(\cdot|s_{t},a_{t})$;
		\STATE $\mathcal{D} \leftarrow \mathcal{D} \cup \{s_{t},a_{t},\mathbf{1}_{\mathcal{G}}(s_{t}),s_{t+1}\}$;
		\STATE Reset environment if $\mathbf{1}_{\mathcal{G}}(s_{t})=1$;
		\ENDFOR
		\FOR{each gradient step}
		\STATE Update $\phi$ by minimizing $\left(y_{V} - Q_{\phi}(s_{t}, a_{t}) \right)^{2}$;
		\STATE Update $\psi$ by minimizing $\left(y_{T} - Q_{\psi}(s_{t}, a_{t}) \right)^{2}$; 
		\STATE Update $\theta_{s}$ as the solution of \eqref{eqn:actor_update};
		\STATE Update $\omega_{s}$ as the solution of \eqref{eqn:lagrangian_update};
		\IF {ESS}
		\STATE Update $\theta_{e}$ as the solution of \eqref{eqn:exploratory_actor_update};
		\STATE Update $\omega_{e}$ as the solution of \eqref{eqn:exploratory_lagrangian_update};
		\ENDIF
		\ENDFOR
		\STATE Soft target update for SS-actor/critic: $\theta_{s}' \leftarrow \tau\theta_{s} + (1-\tau)\theta_{s}'$, $\phi' \leftarrow \tau \phi + (1-\tau)\phi'$, $\psi' \leftarrow \tau\psi+(1-\tau)\psi'$;
		\ENDFOR
	\end{algorithmic}
\end{algorithm}

\section{Simulation Studies}

In this section, we demonstrate our safe learning and safety specification methods using simulated control tasks.
We test the validity of our approach in a simple double integrator and further verify our deep RL algorithms with the high-dimensional dynamic system introduced in \cite{duan2016benchmarking}, both of which have a tuple of positions and velocities as a state.
To make the environments congruous with our problem setting, a target set is defined as the exterior of a certain bounded region of the state space, a setup that enables the implementation of tabular Q-learning.
The description of environments, including the definition of the target sets, can be found in Appendix B.

In Section \ref{sec:method}, we stated the theoretical guarantees as follows.
First, Lyapunov-based methods obtain a subset of $S^{\ast}(\alpha)$.
Second, the improved safe set includes the current safe set.
Third, the agent ensures safety while running in the environment if the initial state is safe.
However, in practice, these guarantees cannot be strictly satisfied, since we determine a safe set with the approximated probability of unsafety.
To distinguish the obtainable safe set from the ideal one derived from the true MDP, we represent the estimate of the safe set under $\pi$ as
\[
	\hat{S}^{\pi}(\alpha) := \{ \bm{s} \in \mathcal{S} : \pi_{s}(\cdot | \bm{s})\hat{Q}_{V}(\bm{s},\cdot) \leq \alpha \}.
\]
We introduce two metrics to quantify how close well-trained RL agents are to such guarantees.
Regarding the accuracy of safety specification, we inspect if a safe set contains the elements of $S^{\ast}(\alpha)$ and if it does not include the unreliable states in $S^{\ast}(\alpha)^{c}$.
We thus consider the \emph{ratio of correct specification}
\[
	r_{\mathrm{c}} = |\hat{S}^{\pi}(\alpha) \cap S^{\ast}(\alpha)| / |S^{\ast}(\alpha)|,
\]
and the \emph{ratio of false-positive specification}
\[
	r_{\mathrm{fp}} = |\hat{S}^{\pi}(\alpha) \cap S^{\ast}(\alpha)^{c}| / |\mathcal{S}|.
\]
We also verify safe exploration throughout learning by tracking the proportion of safely terminated episodes among the 100 latest episodes, which is denoted by the \emph{average episode safety} (AES).
A trajectory is considered safe if an agent reaches terminal states without visiting $\mathcal{G}$ or stays in $\mathcal{G}^{c}$ until the time limit.

Throughout our experiments, we set $\alpha = 0.2$, so AES should be no less than $0.8$ to guarantee safe navigation. We also improve  learning speed by introducing
a discount factor $\gamma < 1$, which is equivalent to $p(s_{t+1}\in \mathcal{S}_{\mathrm{term}}|s_{t},a_{t})$.   As the key idea of our approach is the separation of the exploratory policy from the SS-policy, we set an unmodified RL method as baseline;
that is, the baseline agents are trained to minimize the expected sum of $x_{t}\mathbf{1}_{\mathcal{G}}(s_{t})$.

\subsection{Tabular Q-Learning}

First, we evaluate our Lyapunov-based safety specification methods with tabular implementations.
For tabular Q-learning, we discretize a continuous action $a=(a_{1},\cdots,a_{\dim{\mathcal{A}}})$ into partitions of $A_{1},\cdots,A_{\dim{\mathcal{A}}}$ equal intervals for each element.
In other words, applying the $n$th action for $a_{m}$ is interpreted as $a=(a_{m,\mathrm{max}} - a_{m,\mathrm{min}})\frac{n}{A_{m}-1} + a_{m,\mathrm{min}}$.
Likewise, state space is represented as a finite-dimensional grid.
Based on the MDP quantized as above, the true probability of safety is computed via dynamic programming.

We use a double integrator environment to test the tabular cases.
To reduce the size of $S^{\ast}(\alpha)$, we modify the integrator to perturb the input acceleration with a certain probability.
We compare LSS, ESS, and a baseline Q-learning with no extra techniques to shield unsafe actions.
We initialize the Q-function tables with random values uniformly sampled from the interval $[0.99,1]$; that is, the probability of unsafety is estimated as nearly 1 in all states.
Therefore, in the tabular setting we impose the assumption that all unvisited states have the probabilistic safeties lower than the threshold.
We then perform the policy improvement $10^{2}$ times, each of which comes after $10^{6}$ environment steps.

\begin{figure}[!t]
	\centering
	\begin{subfigure}[b]{.48\columnwidth}
		\centering
		\includegraphics[width=\linewidth]{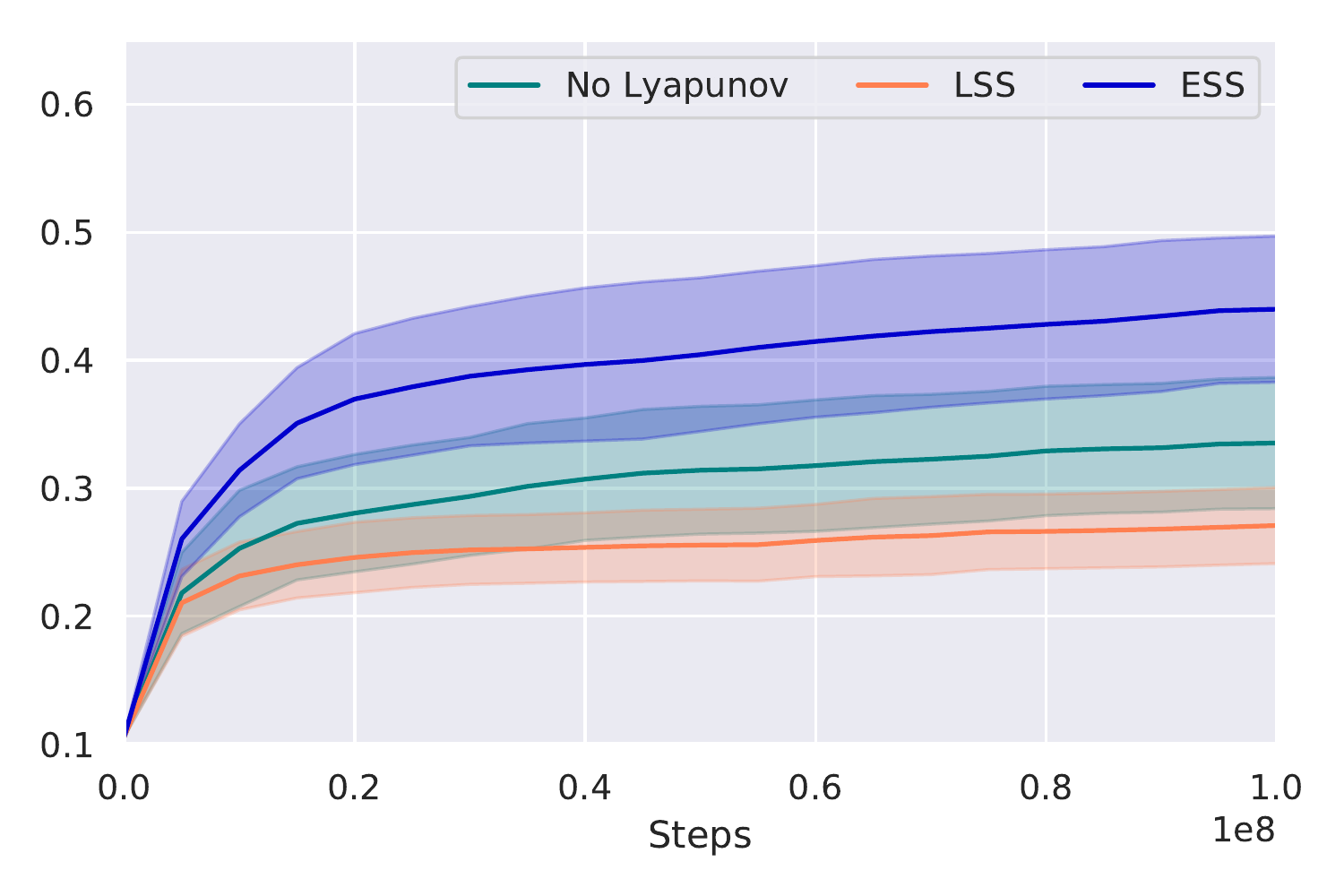}
		\caption{Integrator, $r_{\mathrm{c}}$}
	\end{subfigure}%
	\hfil
	\begin{subfigure}[b]{.48\columnwidth}
		\centering
		\includegraphics[width=\linewidth]{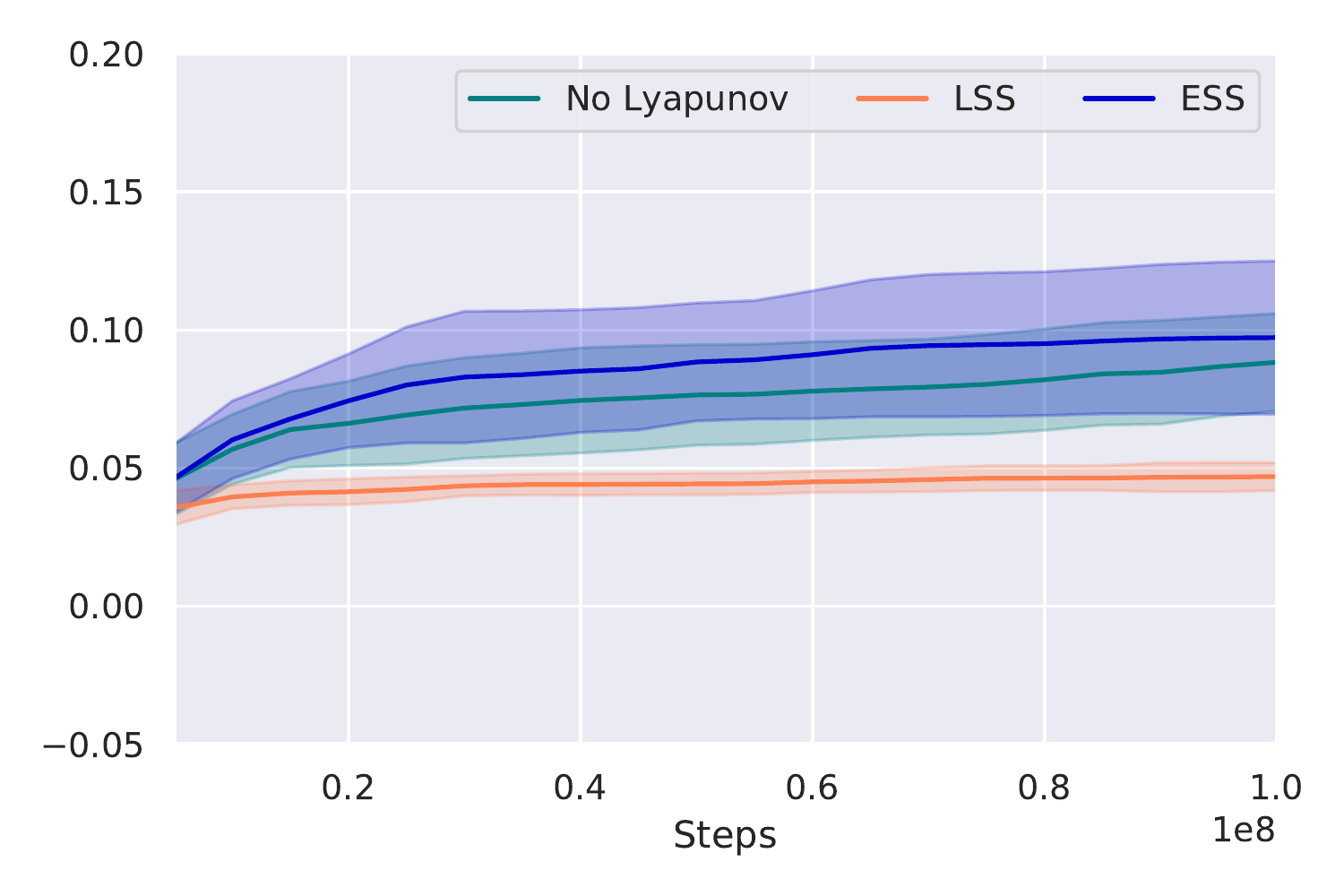}
		\caption{Integrator, $r_{\mathrm{fp}}$}
	\end{subfigure}
	\caption{Safety specification via tabular Q-learning tested on the double integrator. The solid line denotes the average, and the shaded area indicates the confidence interval of 20 random seeds. The baseline, LSS, and ESS are denoted by teal, orange, and blue, respectively.}
	\label{fig:speculation_tab}
\end{figure}

Fig. \ref{fig:speculation_tab} summarizes the specification result averaged across 20 random seeds.
Both LSS and ESS  show monotonic improvement of the approximated safe set $\hat{S}^{\pi}(\alpha)$.
Notably, we find evidence of ESS taking advantage of exploration.
The $r_{\mathrm{c}}$ of ESS increase faster than those of LSS or the baseline, while the excess of $r_{\mathrm{fp}}$ of ESS is negligible.
The average value of $r_{\mathrm{c}}$ is $44\%$ for ESS, surpassing the baseline of $34\%$.
The effect of ESS culminates at the beginning of the learning process then dwindles because the boundary of $\hat{S}^{\pi}(\alpha)$ becomes unlikely to reach as the set inflates, so the chance of exploration decreases.
Ideally, with the appropriate choice of $\gamma \approx 1$ and the learning rate, $r_{\mathrm{fp}}$ is nearly 0.
We skip the AES in Fig. \ref{fig:speculation_tab}, since no agent lacks safety confidence.
However, the AES might decline without the limit, since an episode is configured to terminate after 200 steps, which restricts the chance of reaching the target set.

\begin{figure}[!t]	
\centering
\begin{subfigure}[b]{.27\columnwidth}
		\centering
		\includegraphics[width=\linewidth]{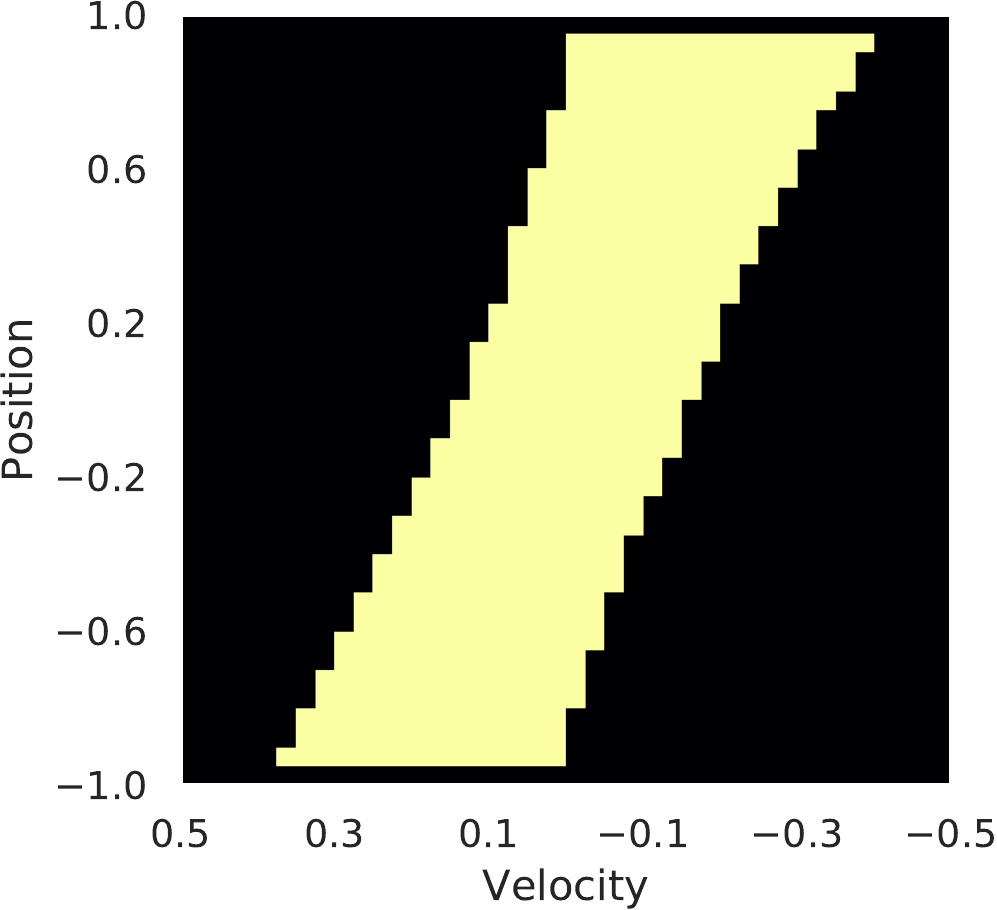}
		\caption{Ground truth}
	\end{subfigure}%
		\hspace*{\fill}
\begin{subfigure}[b]{.23\columnwidth}
		\centering
		\includegraphics[width=\linewidth]{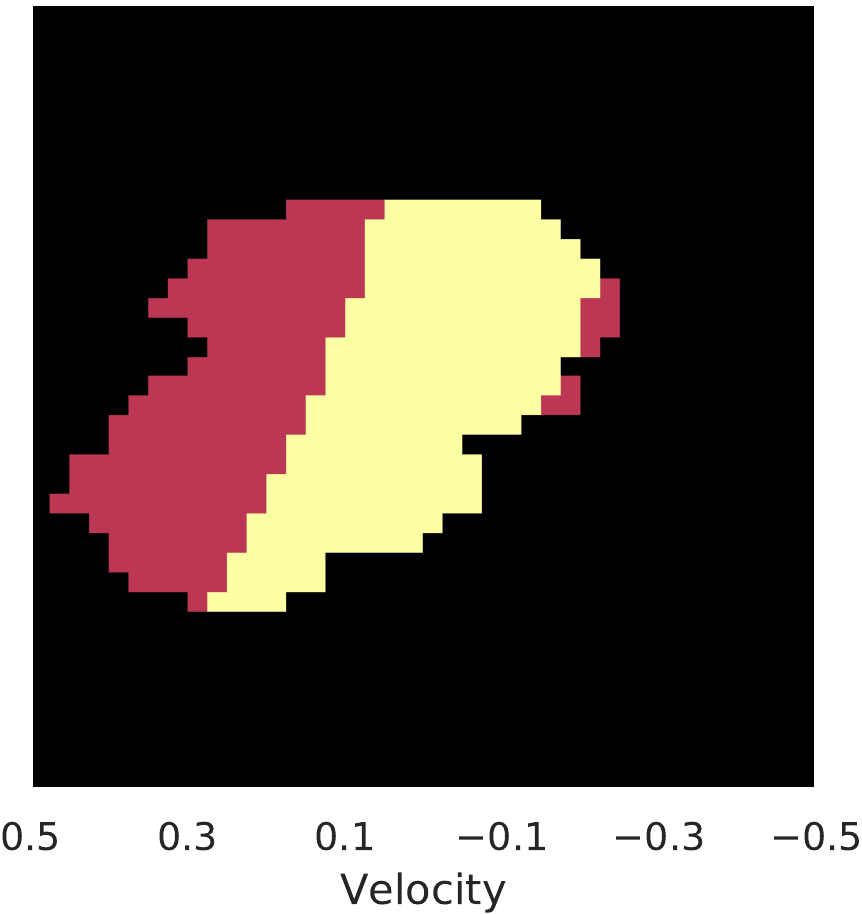}
		\caption{Baseline}
\end{subfigure}	
\begin{subfigure}[b]{.23\columnwidth}
		\centering
		\includegraphics[width=\linewidth]{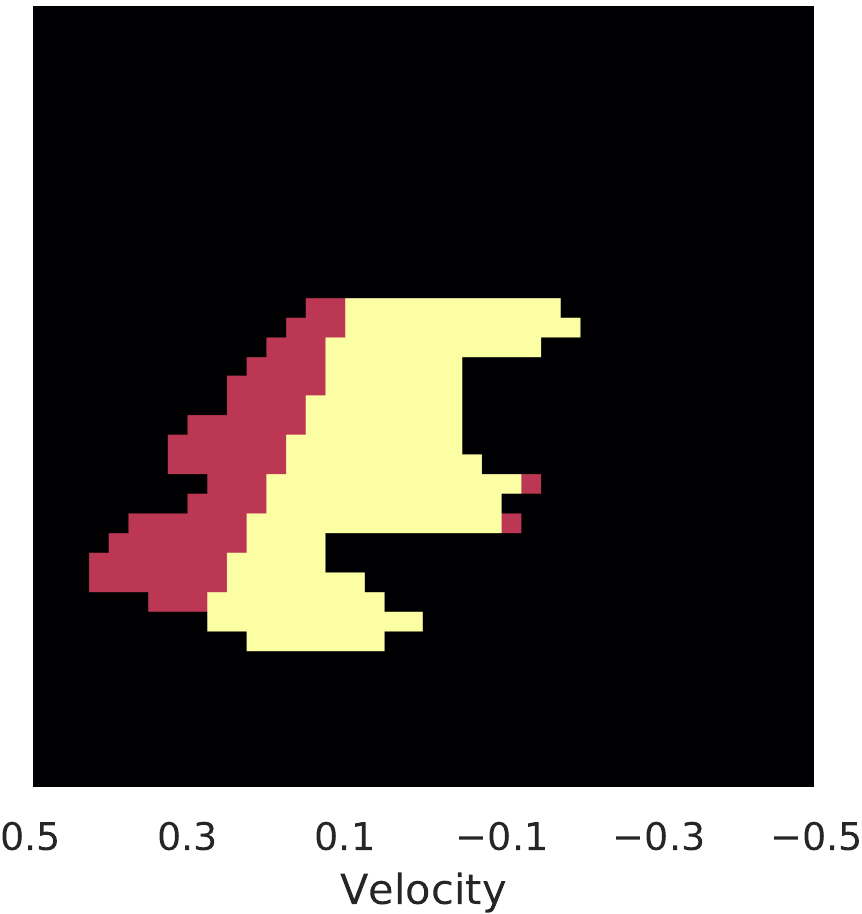}
		\caption{LSS}
\end{subfigure}	
\begin{subfigure}[b]{.23\columnwidth}
		\centering
		\includegraphics[width=\linewidth]{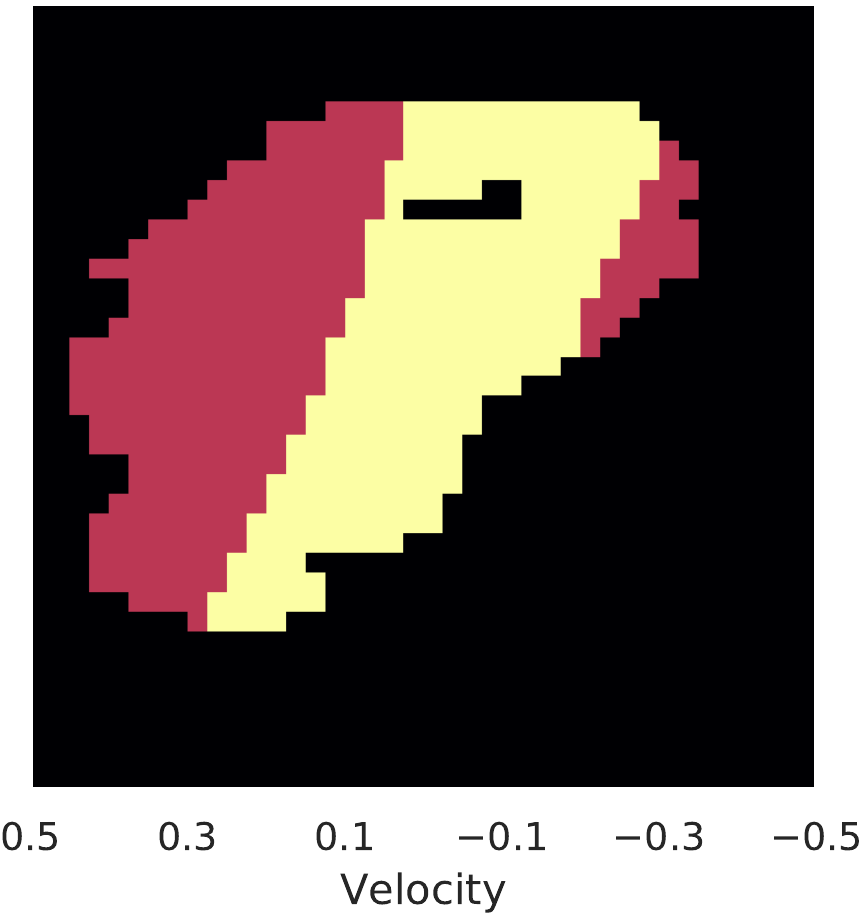}
		\caption{ESS}
\end{subfigure}	
	\caption{Safe sets for the integrator problem with $\alpha = 0.2$. Each grid point denotes a state $(\mathrm{position}, \mathrm{velocity})$. The ground truth $S^{\ast}(\alpha)$ is denoted by yellow in (a). The other figures show the safe set estimated by (b) the baseline, (c) LSS, and (d) ESS. The shaded region represents $\hat{S}^{\pi}(\alpha)$: correctly specified states are marked yellow, and unsafe states misclassified as safe are marked red. }
	\label{fig:visualize_tab}
\end{figure}

We illustrate the safety analysis results of respective methods and the ground-truth probabilistic safe set in Fig.~\ref{fig:visualize_tab}.
Each approximated safe set is established from the Q-learning table of an agent with the highest rate of correct specification among the 20 random seeds analyzed in Fig.~\ref{fig:speculation_tab}.
A grid map represents the whole non-target set except for the grid points on the sides, and the approximated safe set is the set of red and yellow points.
The size of $\hat{S}^{\pi}(\alpha)$ for ESS is notably larger than that of the baseline or LSS in the cases of both correctly specified states (yellow) and misclassified states (red).
However, the false-positive in the  safe set estimated by ESS is hardly due to the ESS method but comes from a universal limitation of tabular Q-learning.
This can be explained from the observation that the ratio of misclassified states over the whole $\hat{S}^{\pi}(\alpha)$ of ESS is greater than that of the baseline only by $5\%$;
that is,  ESS  does not particularly overestimate the probability of safety in unsafe states.
The ESS Q-learning is expected to obtain an accurate estimate of $S^{\ast}$ if the implementation of Q-learning is improved.

\subsection{Deep RL}\label{sec:deeprl_result}

We present the experimental results in Algorithm \ref{alg:actor_critic_lyapunov} using a realistic robotic simulation.
We demonstrate that our approach can be coupled with well-established deep RL methods to perform safety specifications efficiently in the continuous state and action space.
Details about our deep RL implementation can be found in Appendix~A.
We consider a \emph{Reacher} system for safety analysis.
In the Reacher, safety constraints are set on the position of the end effector (See Appendix B for details).

We implement the LSS and ESS actor-critic in DDPG \cite{lillicrap2015continuous}, and the baseline.
For the sake of fairness, all the algorithms use the same actor network weight and the same replay memory at the start of learning.
The critics are initialized randomly, but the bias value for each layer of $Q_{V}$ is set to 1 so that $Q_{V}(\bm{s}, \bm{a}) = 1$ for almost all $(\bm{s}, \bm{a}) \in \mathcal{S} \times \mathcal{A}$.
This ensures that the ratio of correct specification is 0 at the very beginning.
We also optimize only the critics for the first $10^{5}$ steps to reduce the discrepancies between critics and actors.
The techniques mentioned in Section \ref{sec:deeprl} are also applied: we fill $20\%$ of each minibatch with the unsafe samples and use double $Q_{V}$ networks for critic update.

\begin{figure}[!t]
	\centering
	\begin{subfigure}[b]{.48\columnwidth}
		\centering
		\includegraphics[width=\linewidth]{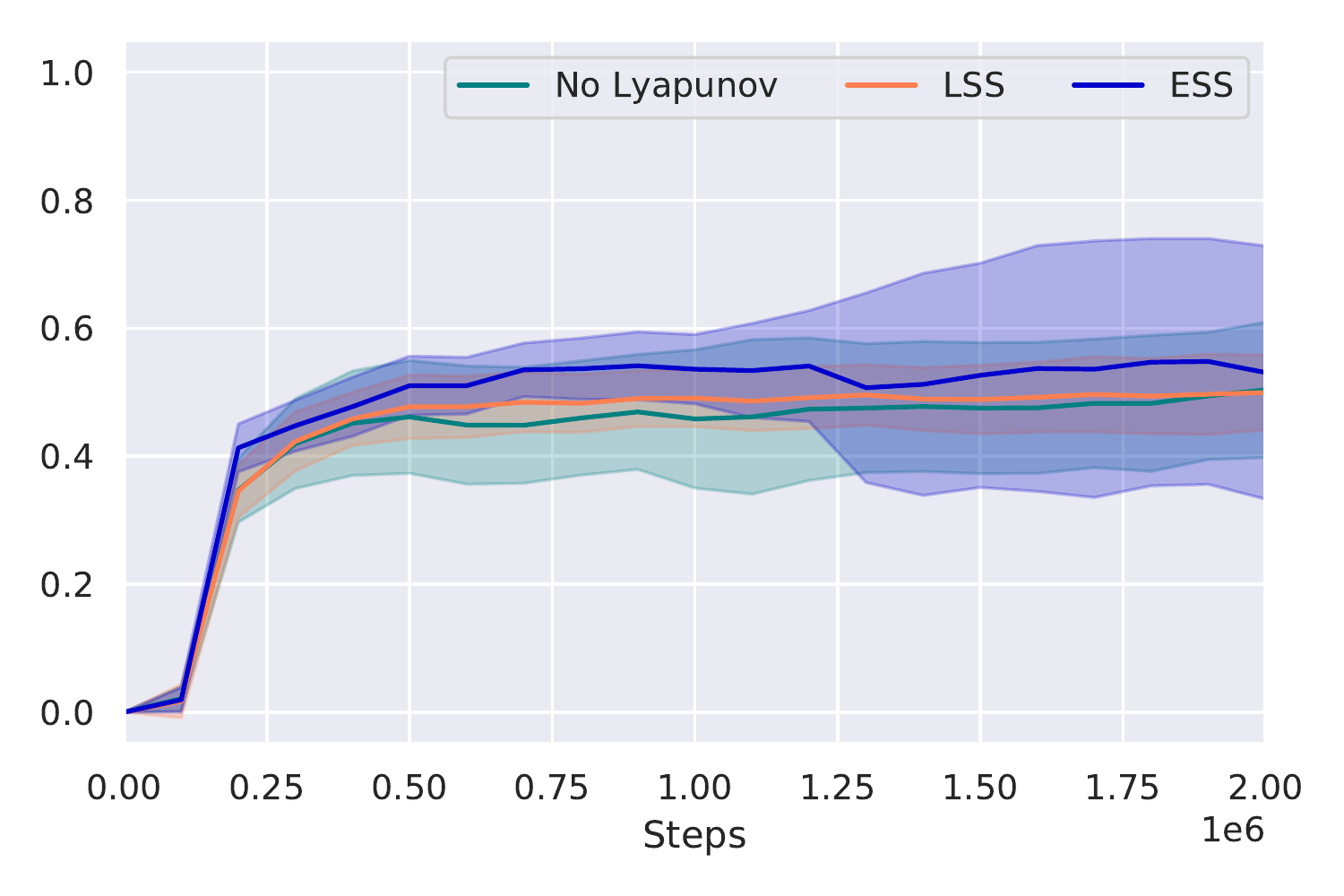}
		\caption{Reacher, $r_{\mathrm{c}}$ (average)}
		\label{fig:reacher_spec_ddpg_correct_average}
	\end{subfigure}
	\hspace*{\fill}
	\begin{subfigure}[b]{.48\columnwidth}
		\centering
		\includegraphics[width=\linewidth]{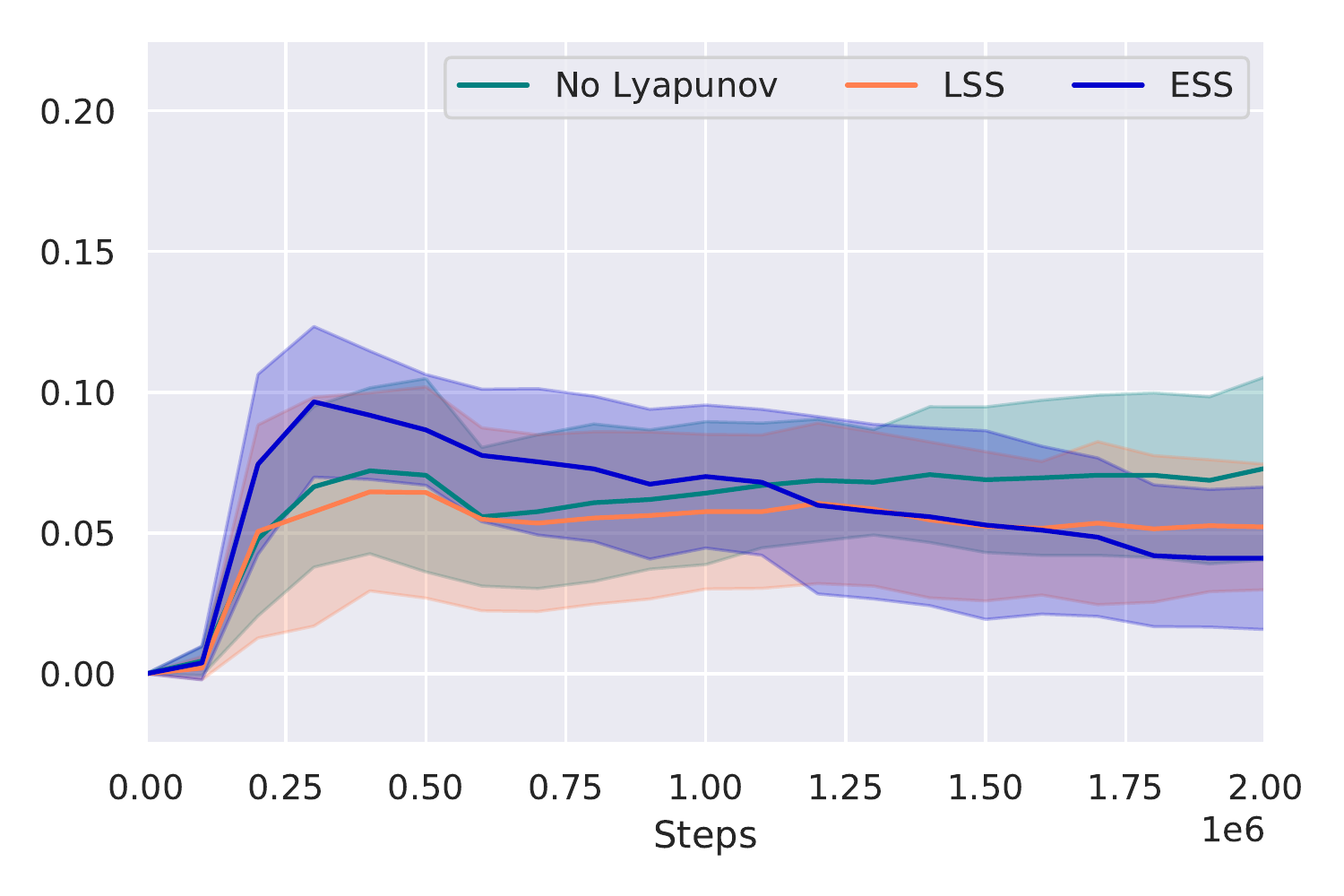}
		\caption{Reacher, $r_{\mathrm{fp}}$ (average)}
		\label{fig:reacher_spec_ddpg_falsepositive_average}
	\end{subfigure}
	\hfil
	\centering
	\begin{subfigure}[b]{.48\columnwidth}
		\centering
		\includegraphics[width=\linewidth]{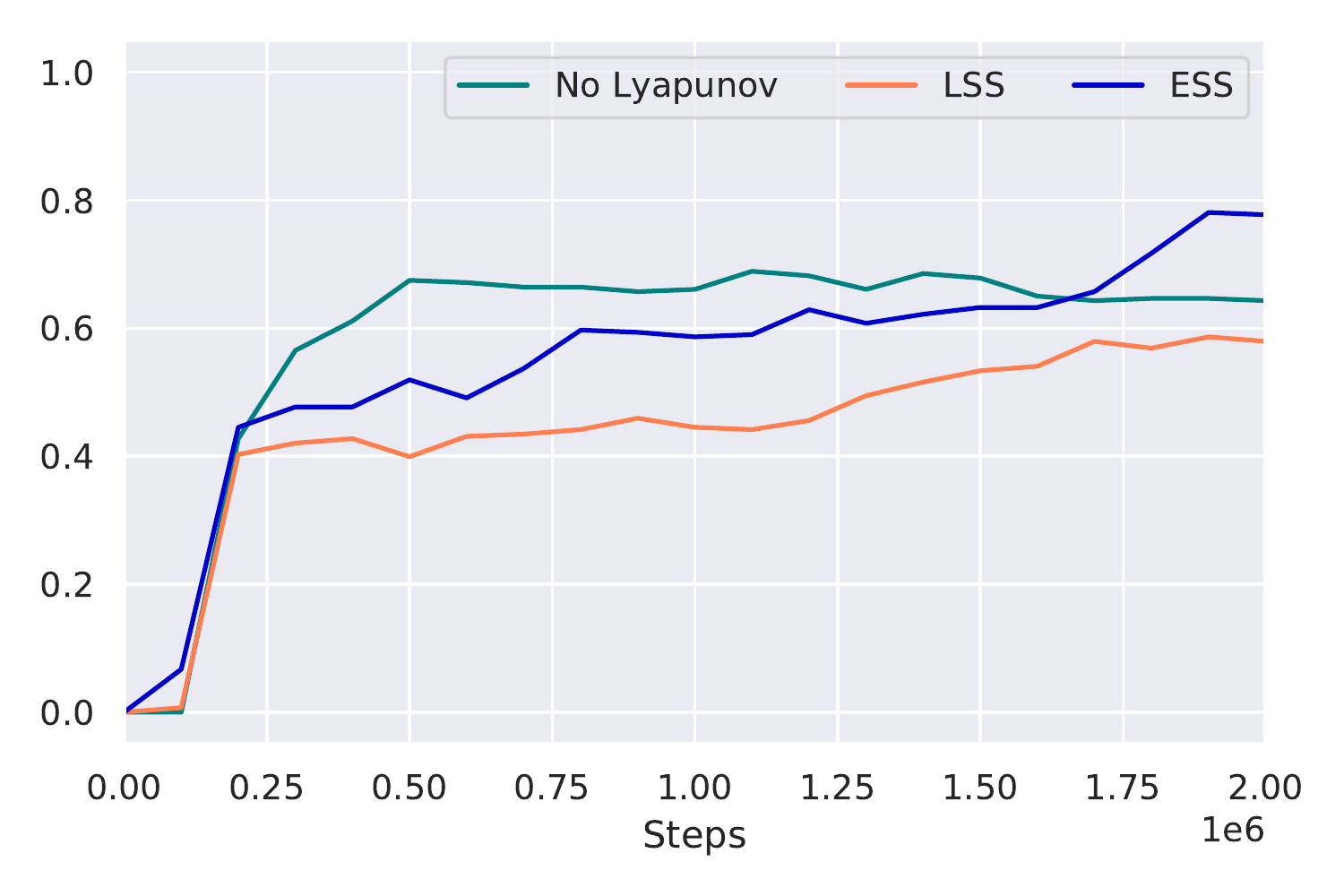}
		\caption{Reacher, $r_{\mathrm{c}}$ (best)}
		\label{fig:reacher_spec_ddpg_correct_best}
	\end{subfigure}
	\hspace*{\fill}
	\begin{subfigure}[b]{.48\columnwidth}
		\includegraphics[width=\linewidth]{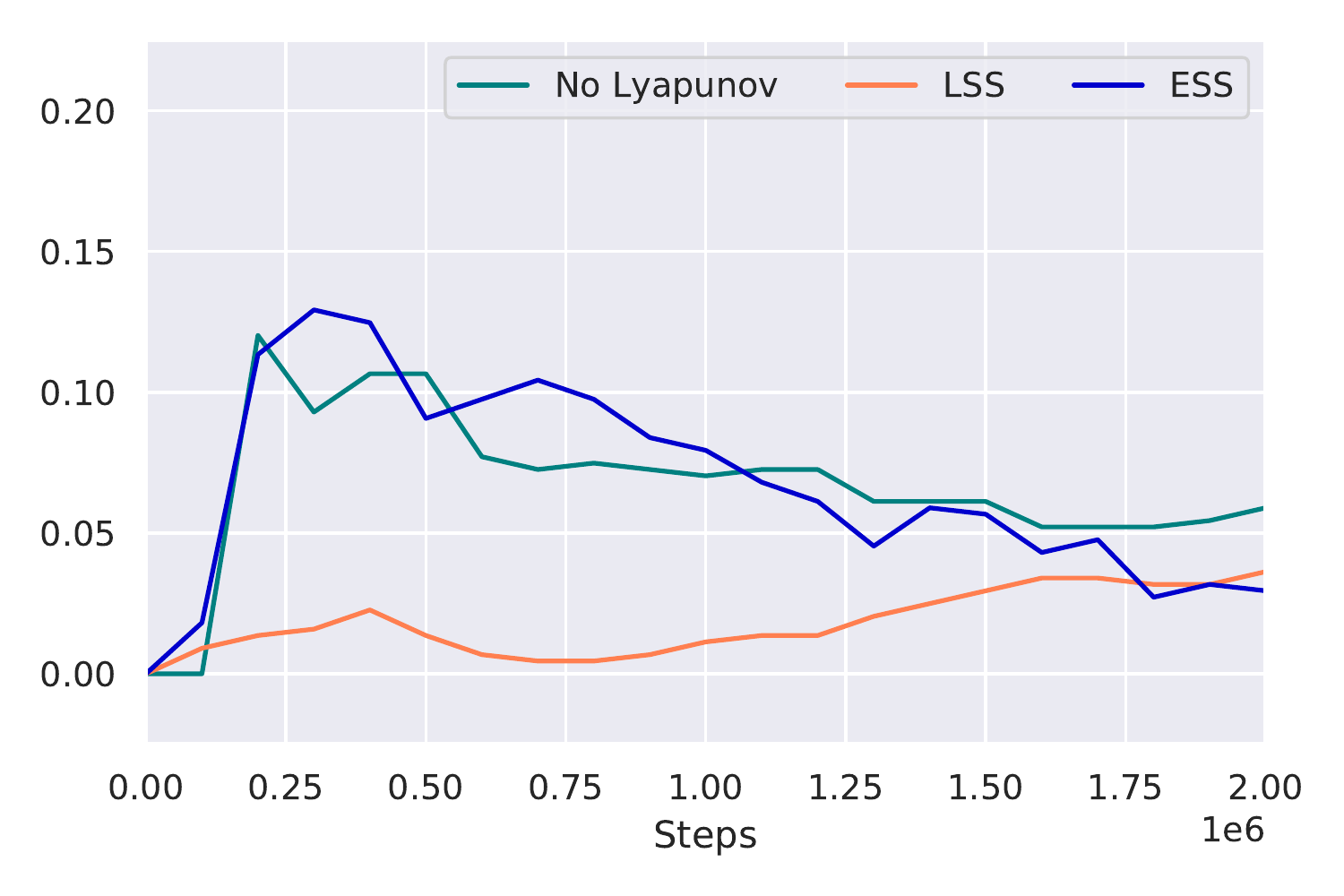}
		\caption{Reacher, $r_{\mathrm{fp}}$ (best)}
		\label{fig:reacher_spec_ddpg_falsepositive_best}
	\end{subfigure}
	\hfil
	\centering
	\begin{subfigure}[b]{.48\columnwidth}
		\centering
		\includegraphics[width=\linewidth]{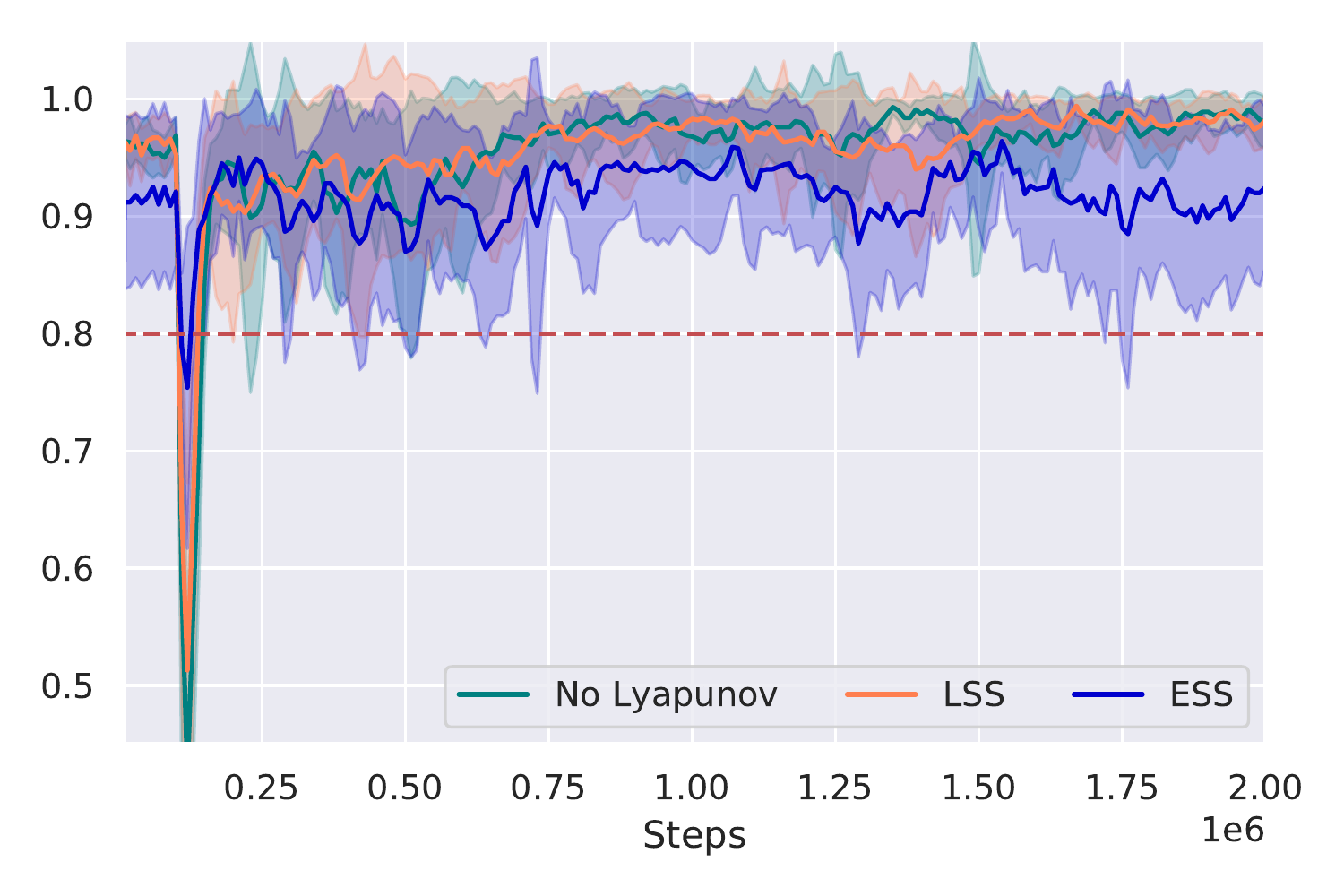}
		\caption{Reacher, AES}
		\label{fig:reacher_spec_ddpg_aesafety}
	\end{subfigure}
	\caption{Safety specification via deep RL tested on the Reacher. (a-b) are the results averaged across 10 random seeds, and (c-d) are the best results for various methods. (e) displays the average episode safety swept across all seeds. Color schemes are equivalent to Fig. \ref{fig:speculation_tab}.} 
	\label{fig:speculation_ddpg}
\end{figure}

The Lyapunov-based RL agents require auxiliary cost~$\epsilon$, as in Section \ref{sec:setup}.
For the case of a continuous state space, the safe set is not explicitly defined, so $\epsilon$ should be approximated.
We first set the denominator of $\epsilon$ to $T^{\pi}(\bm{s}) \approx (1 - \gamma)^{-1}$ to prevent it from being larger than the true value.
To estimate $\min_{\bm{s} \in S^{\ast}(\alpha)}\{ \alpha - V^{\pi}(\bm{s},1)\}$, we use supplementary memory that remembers the value of $\{\alpha - V^{\pi}(\bm{s},1)\}^{+}$ for $\bm{s}$ such that $V^{\pi}(\bm{s},1) \leq \alpha$.s
When an episode is terminated, an agent computes $V^{\pi}$ for all the states in the trajectory and find the maximum among the values that satisfy $V^{\pi}(\bm{s},1) \leq \alpha$.
The memory stores the result for the 100 latest trajectories.

We also exploit the two actors of the ESS actor-critic to ensure safe operation.
Since it takes time to construct a stable exploratory actor, the agent makes stochastic choices between the two actors in the early stages.
The probability of an SS-actor being chosen is 1 at the first gradient step and declines linearly until it becomes 0 after the first half of the learning process.
The SS-actor is also utilized as the backup policy; that is, the agent takes the action using $\pi_{s}$ if the AES is less than the threshold $1-\alpha$, regardless of the policy choice scheme described above.
To reduce computation time, $\lambda_{\omega_{s}}$ is fixed to 0 for the ESS actor-critic.

Fig. \ref{fig:speculation_ddpg} summarizes the experimental result.
We perform tests on 10 random seeds to take an average (\ref{fig:reacher_spec_ddpg_correct_average}, \ref{fig:reacher_spec_ddpg_falsepositive_average}) and to display the ones that attain the greatest $r_{\mathrm{c}}$ among various methods (\ref{fig:reacher_spec_ddpg_correct_best}, \ref{fig:reacher_spec_ddpg_falsepositive_best}).
Comparing the average cases, the ESS actor-critic shows improvement in both specification criteria, and is noticeable for false positives.
ESS consistently reduces $r_{\mathrm{fp}}$ except for the first $3\times10^5$ steps and then achieves $4.10\%$, while the baseline and LSS settle at $7.30\%$ and $5.22\%$, respectively.
The learning curves of ESS and the baseline are similar at the very start, since ESS does not regularly use the exploratory policy then.
The  exploratory policy in ESS supplements novel information about the states, which are normally the elements of the target set, and the safe set thus becomes more accurate.
On the other hand, those of the baseline stay stagnant because the agent barely falls into an unusual trajectory with the SS-policy.
Regarding LSS, we observe that the regularization term in its update rule degrades the overall performance.

As seen by the large confidence interval of ESS in Fig. \ref{fig:reacher_spec_ddpg_correct_average}, the effect of the exploratory policy varies.
ESS  performs as the description in Section \ref{sec:ess};
considering the best cases, ESS attains $77.7\%$ for the correct specification, which is $13.4\%$ above the baseline.
The exploratory policies sometimes converge fast and become indifferent to the SS-policies in terms of exploration, resulting in poor performance.
Note that the difference in ESS behavior is determined by the approximation error in the critic $Q_{V}$.
Although it is difficult to organize the parametrized critic, we can exploit the potential of ESS by running on multiple seeds and finding the best among them.

\begin{figure}[!t]	
	\centering
	 \begin{subfigure}[b]{.27\columnwidth}
		\centering
		\includegraphics[width=\linewidth]{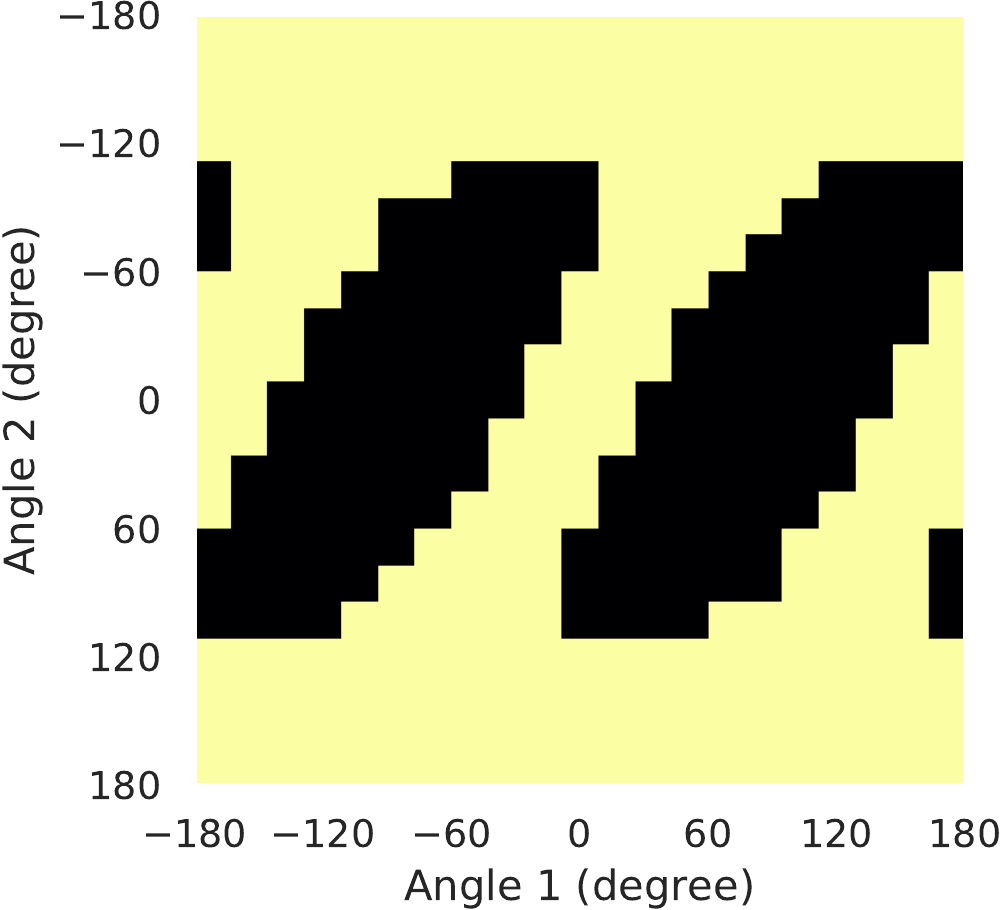}
		\caption{Ground truth}
	\end{subfigure}%
		\hspace*{\fill}
\begin{subfigure}[b]{.23\columnwidth}
		\centering
		\includegraphics[width=\linewidth]{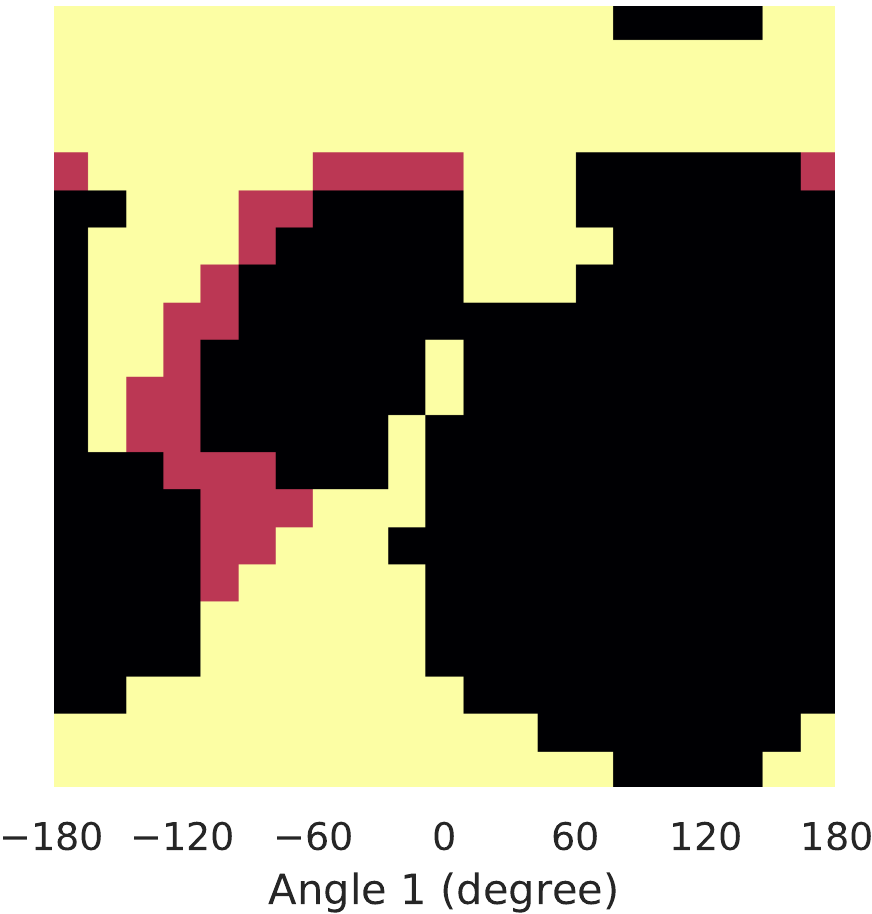}
		\caption{Baseline}
\end{subfigure}	
\begin{subfigure}[b]{.23\columnwidth}
		\centering
		\includegraphics[width=\linewidth]{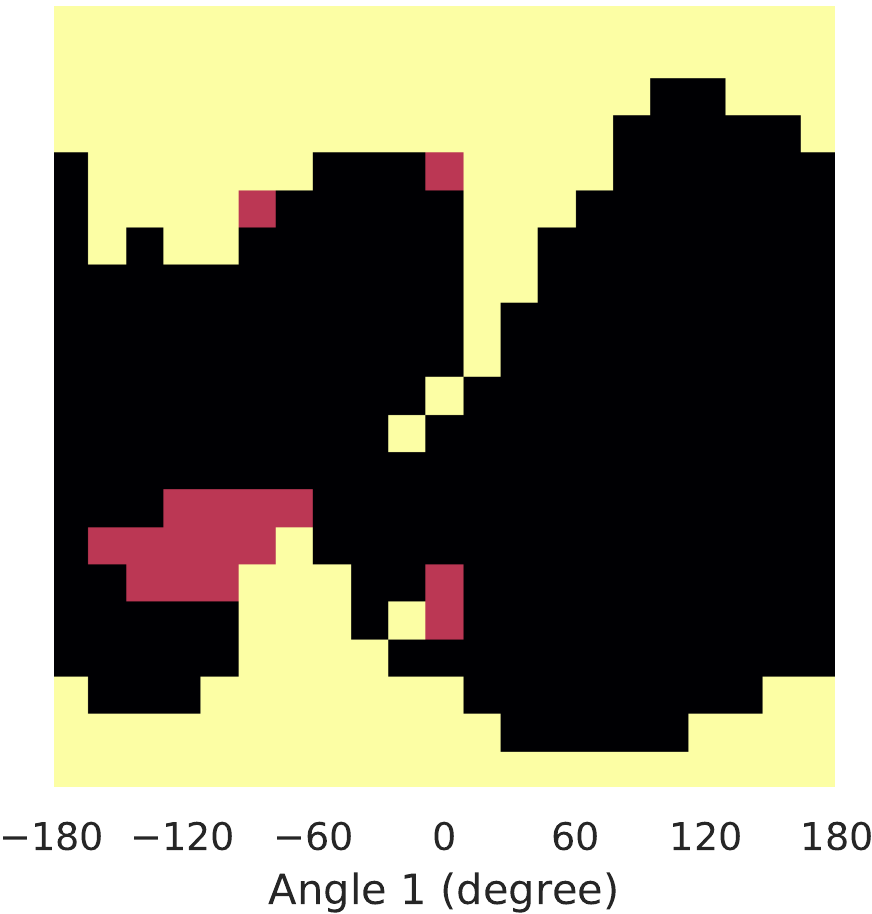}
		\caption{LSS}
\end{subfigure}	
\begin{subfigure}[b]{.23\columnwidth}
		\centering
		\includegraphics[width=\linewidth]{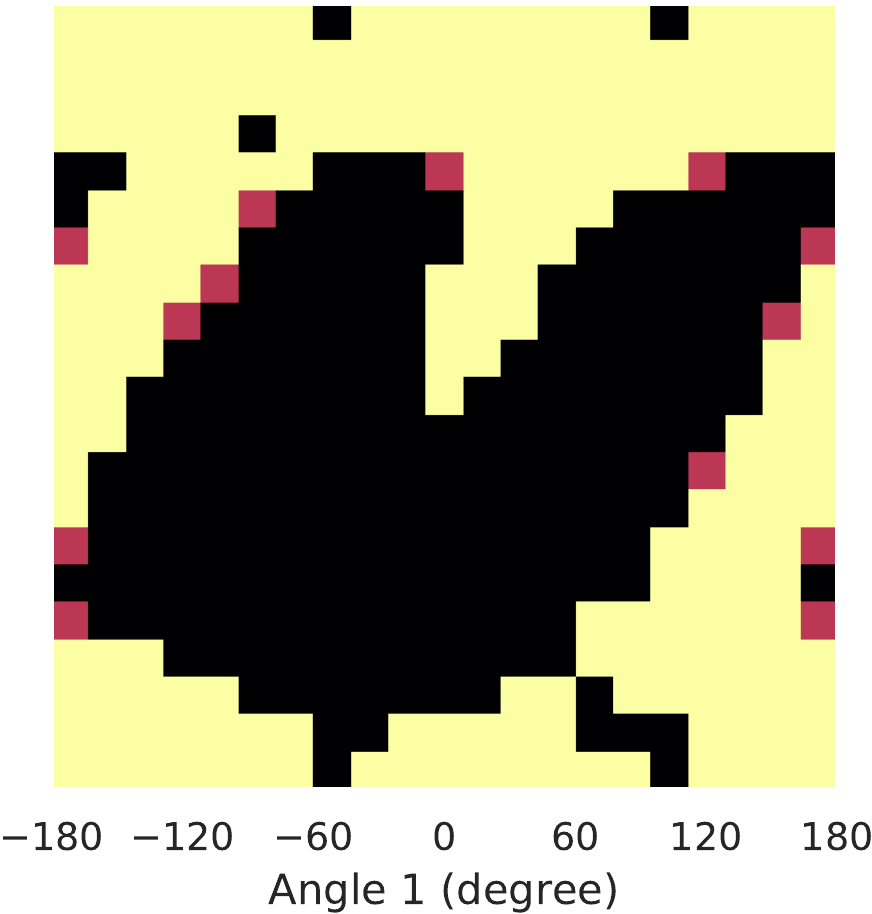}
		\caption{ESS}
\end{subfigure}	
	\caption{Safe sets in the state space of the Reacher. Each grid point denotes a state of the end effector whose position is determined by the angles of the two joints and whose velocity is 0. Given $\alpha = 0.2$, the ground truth $S^{\ast}(\alpha)$ is denoted by yellow in (a). The other figures show the estimated safe set obtained by (b) the baseline, (c) LSS, and (d) ESS. Color schemes are equivalent to Fig. \ref{fig:visualize_tab}.}
	\label{fig:visualize_ddpg}
\end{figure}

In Fig. \ref{fig:visualize_ddpg}, we further visualize a relevant part of the state space and the safe sets in it.
Each grid map displays $\hat{S}^{\pi}(\alpha)$ of the agent whose $r_{\mathrm{c}}$ is the greatest among the 10 random seeds discussed above.
The safe set obtained by ESS  clearly resembles the true safe set better than the others.

\section{Conclusion}


We have proposed a model-free safety specification tool that incorporates a Lyapunov-based safe RL approach with probabilistic reachability analysis.
Our method exploits the Lyapunov constraint to construct an exploratory policy that mitigates the discrepancy between state distributions of the experience replay (or the tabular Q-function) and the environment.
Another salient feature of 
the proposed method is that it can be implemented on generic, model-free deep RL algorithms, particularly in continuous state and action spaces through Lagrangian relaxation.
The results of our experiments demonstrate that our method encourages visiting the unspecified states, thereby improving the accuracy of specification. 
By bridging probabilistic reachability analysis and reinforcement learning, this work can provide an exciting avenue for future research in terms of extensions to partially observable MDPs, and  model-based exploration and its regret analysis, among others.


\section*{Appendix}

\subsection{Deep RL Implementation}\label{appendix:deeprl}

In this section, we provide a specific description of the deep RL agents used in our experiments.
Table \ref{tab:ddpg_top_layer} displays the basic architecture of neural networks, all of which are fully connected and consist of two hidden layers with ReLU as an activation function unless it is an estimator of $Q_{V}$.
The first and second hidden layers have 400 and 300 nodes, respectively.
Adam optimizer \cite{kingma2015adam} is used to apply gradient descent.
Aside from the techniques stated in Section \ref{sec:deeprl_result}, an action is perturbed with Ornstein-Uhlenbeck noise with parameters $\mu = 0$, $\theta = 0.1$, and $\sigma = 0.05$.

\begin{table}[!h]
	\centering
	\begin{tabular}{ llll }
		\toprule
		Type			& Output size		&	Activation              &   Learning rate\\
		\midrule
		Critic			& 1					&	$\mathrm{clamp}(0,1)$   &   $10^{-4}$\\
		Actor			& $\mathrm{dim}(a)$	&	tanh                    &   $10^{-5}$\\
		$\log\lambda$	& 1					&	$\mathrm{clamp}(-10,6)$ &   $10^{-6}$\\
		\bottomrule
	\end{tabular}
	\caption{The top layers of respective networks in DDPG.}
	\label{tab:ddpg_top_layer}
\end{table}

\subsection{Environments}\label{appendix:envs}

An environment provides a Boolean \texttt{done} signal that declares the termination of an episode strictly equivalent to $\bm{1}_{\mathcal{S}_{\mathrm{term}}}$.
When its value is 1, both $Q_V$ and $Q_T$ at that state are set to 0. If the length of an episode exceeds the time limit before arriving at a terminal state, the environment resets itself, but \texttt{done} is still 0 at that moment.
Refer to Table \ref{tab:env_params} for the time limit and the discount factor settings.

\textbf{Randomized integrator.}~
A vanilla double integrator is a system with a 2D state $(x_{1},x_{2})$ and the scalar control $u$.
$x_{1}$ and $x_{2}$ represent the position and velocity on a 1D line, respectively. The control is an acceleration.

We add a few features to construct a safety specification problem in this environment.
First, we set the terminal states as the points near the origin $(x_{1},x_{2}) \in [-0.2,0.2] \times [-3.75 \times 10^{-3}, 3.75 \times 10^{-3}]$.
Next the target set is defined as all the states
$(x_1, x_2) \notin [-1,1] \times [-0.5, 0.5]$.
Finally, we restrict admissible action to the range $[-0.5,0.5]$, and adjust the dynamics so that the acceleration is scaled to $0.5u/|u|$ with probability $1/2$.
Due to the introduction of stochastic behavior, it becomes more difficult to reach the terminal states safely than in the original environment.

\begin{figure}[!t]
	\centering
	\includegraphics[width=0.85\columnwidth]{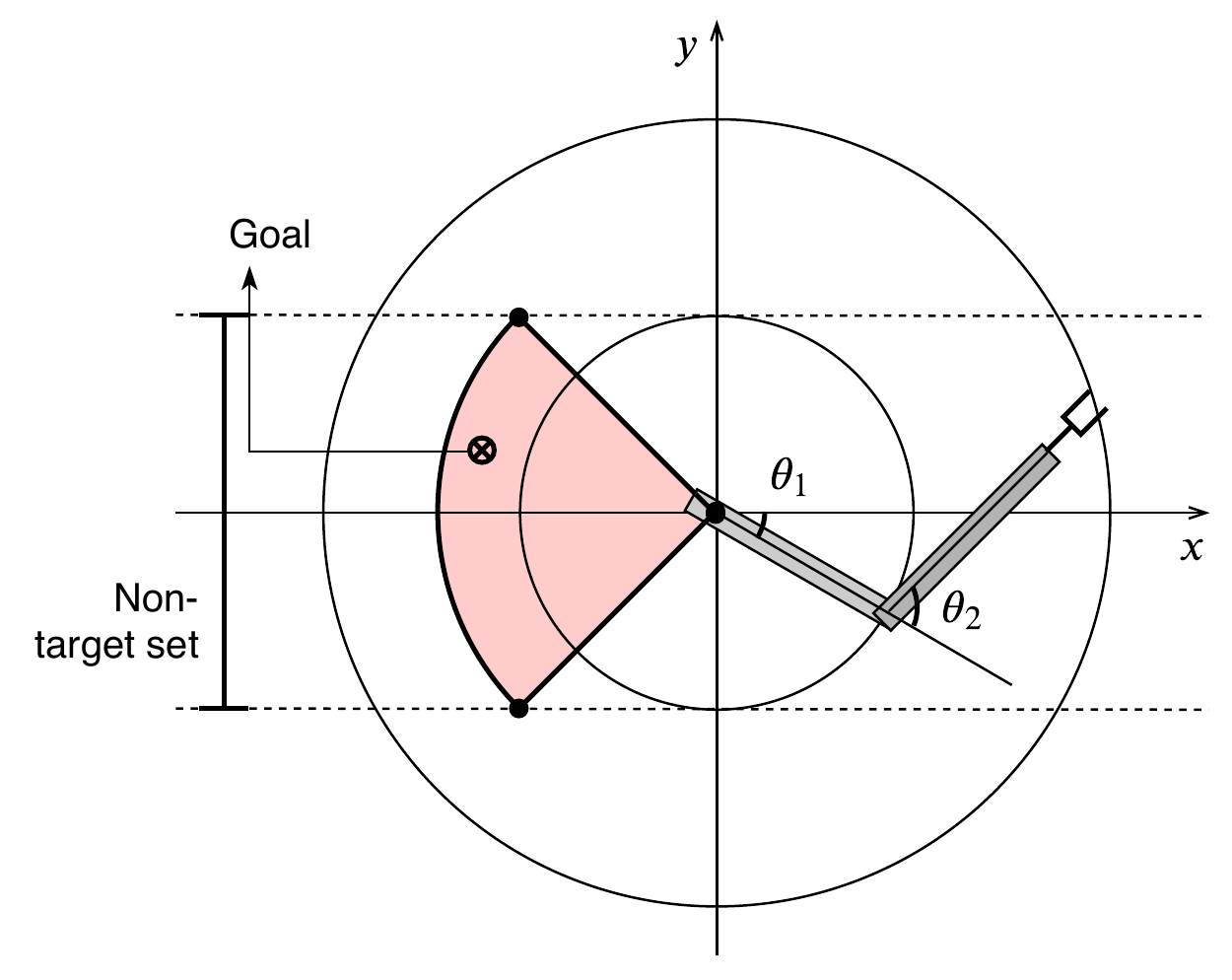}
	\caption{Description of the Reacher environment.}
	\label{fig:reacher}
\end{figure}

\textbf{Reacher.}~
Reacher is a simulative planar 2-DOF robot with two arms attached to joints implemented with a Mujoco engine \cite{mujoco}.
The joint of the first arm is fixed on the center of the plane, and the joint of the second is connected to the movable end of the first.
The objective of the robot is to touch a motionless goal point with its end effector.
An observation is thus defined as a vector that contains the angular positions and the angular velocities of the joints as well as the position of the goal.
The action is defined as the torques on the joints, each of which is bounded in the range $[-1,1]$.

Let the coordinates be defined as in Fig. \ref{fig:reacher}.
Specifically, the goal is deployed randomly in the hued area $\{(x,y) | \sqrt{x^{2}+y^{2}} \leq \sqrt{2}l, |\arctan{y/x}| \leq \pi/4 \}$, where $l$ is the length of one arm.
The exact position changes for each reset.
We define the target set as $\{(x,y) | |y| > l\}$, where $(x,y)$ is the coordinate of the tip. 

We derive the probabilistic safe set in Fig. \ref{fig:visualize_ddpg} under the assuming no friction.
This is not the case in a Mujoco-based simulation, but the effect of such an assumption is minor.
Recall that the states displayed in Fig. \ref{fig:visualize_ddpg} stand for an end effector with zero velocity.
If appropriate control is applied, the robot can avoid reaching the target set by moving toward an arbitrary position near the goal unless it launched from the target set at the beginning. 

In our simulation studies, we only assess the agents with the states where the goal point is given by $(-2l,0)$, and the angular velocity is $(\dot{\theta}_{1},\dot{\theta}_{2}) = (0,0)$.
We use the Reacher configuration provided by Gym \cite{openaigym}.

\begin{table}[!h]
	\centering
	\begin{tabular}{ lll }
		\toprule
		Environment		& Time limit	&	$\gamma$\\
		\midrule
		Integrator		& $1000$		&	$1 - 10^{-4}$\\
		Reacher			& $300$			&	$1 - 10^{-3}$\\
		\bottomrule
	\end{tabular}
	\caption{The environment-specific parameters.}
	\label{tab:env_params}
\end{table}


\bibliographystyle{IEEEtran}
\bibliography{refered_method,related_work}



\end{document}